\def\eqref#1{equation~\ref{#1}}
\def\1{\bm{1}}
\DeclareMathAlphabet{\mathsfit}{\encodingdefault}{\sfdefault}{m}{sl}
\SetMathAlphabet{\mathsfit}{bold}{\encodingdefault}{\sfdefault}{bx}{n}
\newtheorem{theorem}{Theorem}
\newtheorem{lemma}[theorem]{Lemma}
\title{Riemannian Adaptive Optimization Methods}
\author{Gary B\'ecigneul, Octavian-Eugen Ganea\\
Department of Computer Science\\
ETH Z\"urich, Switzerland\\
\texttt{\{gary.becigneul,octavian.ganea\}@inf.ethz.ch}
}
\begin{document}

\maketitle

\begin{abstract}
Several first order stochastic optimization methods commonly used in the Euclidean domain such as stochastic gradient descent (\textsc{sgd}), accelerated gradient descent or variance reduced methods have already been adapted to certain Riemannian settings. However, some of the most popular of these optimization tools $-$ namely \textsc{Adam}, \textsc{Adagrad} and the more recent \textsc{Amsgrad} $-$ remain to be generalized to Riemannian manifolds. We discuss the difficulty of generalizing such adaptive schemes to the most agnostic Riemannian setting, and then provide algorithms and convergence proofs for geodesically convex objectives in the particular case of a product of Riemannian manifolds, in which adaptivity is implemented \textit{across} manifolds in the cartesian product. Our generalization is tight in the sense that choosing the Euclidean space as Riemannian manifold yields the same algorithms and regret bounds as those that were already known for the standard algorithms. Experimentally, we show faster convergence and to a lower train loss value for Riemannian adaptive methods over their corresponding baselines on the realistic task of embedding the WordNet taxonomy in the Poincar\'e ball.
\end{abstract}

\section{Introduction}

Developing powerful stochastic gradient-based optimization algorithms is of major importance for a variety of application domains. In particular, for computational efficiency, it is common to opt for a first order method, when the number of parameters to be optimized is great enough. Such cases have recently become ubiquitous in engineering and computational sciences, from the optimization of deep neural networks to learning embeddings over large vocabularies. 

This new need resulted in the development of empirically very successful first order methods such as \textsc{Adagrad} \citep{duchi2011adaptive}, \textsc{Adadelta} \citep{zeiler2012adadelta}, \textsc{Adam} \citep{kingma2014adam} or its recent update \textsc{Amsgrad} \citep{reddi2018convergence}.

Note that these algorithms are designed to optimize parameters living in a Euclidean space $\mathbb{R}^n$, which has often been considered as the default geometry to be used for continuous variables. However, a recent line of work has been concerned with the optimization of parameters lying on a \textit{Riemannian manifold}, a more general setting allowing non-Euclidean geometries. This family of algorithms has already found numerous applications, including for instance solving Lyapunov equations \citep{vandereycken2010riemannian},
 matrix factorization \citep{tan2014riemannian}, geometric programming \citep{sra2015conic}, dictionary learning \citep{cherian2017riemannian} or hyperbolic taxonomy embedding \citep{nickel2017poincar,ganea2018hyperbolic,desa2018representation,nickel2018learning}. 

A few first order stochastic methods have already been generalized to this setting (see section~\ref{sec:rel_work}), the seminal one being Riemannian stochastic gradient descent (\textsc{rsgd}) \citep{bonnabel2013stochastic}, along with new methods for their convergence analysis in the geodesically convex case \citep{zhang2016first}. However, the above mentioned empirically successful adaptive methods, together with their convergence analysis, remain to find their respective Riemannian counterparts. 

Indeed, the adaptivity of these algorithms can be thought of as assigning one learning rate per coordinate of the parameter vector. However, on a Riemannian manifold, one is generally not given an intrinsic coordinate system, rendering meaningless the notions \textit{sparsity} or \textit{coordinate-wise update}.

\paragraph{Our contributions.} In this work we \textbf{\textit{(i)}} explain why generalizing these adaptive schemes to the most agnostic Riemannian setting in an intrinsic manner is compromised, and \textbf{\textit{(ii)}} propose generalizations of the algorithms together with their convergence analysis in the particular case of a product of manifolds where each manifold represents one ``coordinate'' of the adaptive scheme. Finally, we \textbf{\textit{(iii)}} empirically support our claims on the realistic task of hyperbolic taxonomy embedding.

\paragraph{Our initial motivation.} The particular application that motivated us in developing Riemannian versions of \textsc{Adagrad} and \textsc{Adam} was the learning of symbolic embeddings in non-Euclidean spaces. As an example, the GloVe algorithm \citep{pennington2014glove} $-$ an unsupervised method for learning Euclidean word embeddings capturing semantic/syntactic relationships $-$ benefits significantly from optimizing with \textsc{Adagrad} compared to using \textsc{Sgd}, presumably because different words are sampled at different frequencies. Hence the absence of Riemannian adaptive algorithms could constitute a significant obstacle to the development of competitive optimization-based Riemannian embedding methods. In particular, we believe that the recent rise of embedding methods in hyperbolic spaces could benefit from such developments \citep{nickel2017poincar,nickel2018learning,ganea2018hyperbolic,ganea2018hyperbolicnn,desa2018representation,vinh2018hyperbolic}.\\

%

\section{Preliminaries and notations}\label{sec:prelim}
\subsection{Differential geometry}
We recall here some elementary notions of differential geometry. For more in-depth expositions, we refer the interested reader to \cite{spivak1979comprehensive} and \cite{robbin2011introduction}.

\paragraph{Manifold, tangent space, Riemannian metric.} A \textit{manifold} $\mathcal{M}$ of dimension $n$ is a space that can locally be approximated by a Euclidean space $\mathbb{R}^n$, and which can be understood as a generalization to higher dimensions of the notion of surface. For instance, the sphere $\mathbb{S}:=\{x\in\mathbb{R}^n\mid \Vert x\Vert_2=1\}$ embedded in $\mathbb{R}^n$ is an $(n-1)$-dimensional manifold. In particular, $\mathbb{R}^n$ is a very simple $n$-dimensional manifold, with zero curvature. At each point $x\in\mathcal{M}$, one can define the \textit{tangent space $T_x\mathcal{M}$}, which is an $n$-dimensional vector space and can be seen as a first order local approximation of $\mathcal{M}$ around $x$. A \textit{Riemannian metric} $\rho$ is a collection $\rho:=(\rho_x)_{x\in\mathcal{M}}$ of inner-products $\rho_x(\cdot,\cdot):T_x\mathcal{M}\times T_x\mathcal{M}\to\mathbb{R}$ on $T_x\mathcal{M}$, varying smoothly with $x$. It defines the geometry locally on $\mathcal{M}$. For $x\in\mathcal{M}$ and $u\in T_x\mathcal{M}$, we also write $\Vert u\Vert_x:=\sqrt{\rho_x(u,u)}$. A \textit{Riemannian manifold} is a pair $(\mathcal{M},\rho)$.

\paragraph{Induced distance function, geodesics.} Notice how a choice of a Riemannian metric $\rho$ induces a natural global distance function on $\mathcal{M}$. Indeed, for $x,y\in\mathcal{M}$, we can set $d(x,y)$ to be equal to the infimum of the lengths of smooth paths between $x$ and $y$ in $\mathcal{M}$, where the length $\ell(c)$ of a path $c$ is given by integrating the size of its speed vector $\dot{c}(t)\in T_{c(t)}\mathcal{M}$, in the corresponding tangent space: $\ell(c):=\int_{t=0}^1\Vert\dot{c}(t)\Vert_{c(t)}dt$. A geodesic $\gamma$ in $(\mathcal{M},\rho)$ is a smooth curve $\gamma:(a,b)\to\mathcal{M}$ which locally has minimal length. In particular, a shortest path between two points in $\mathcal{M}$ is a geodesic. 

\paragraph{Exponential and logarithmic maps.} Under some assumptions, one can define at point $x\in\mathcal{M}$ the exponential map $\exp_x:T_x\mathcal{M}\to\mathcal{M}$. Intuitively, this map \textit{folds} the tangent space on the manifold. Locally, if $v\in T_x\mathcal{M}$, then for small $t$, $\exp_x(tv)$ tells us how to move in $\mathcal{M}$ as to take a shortest path from $x$ with initial direction $v$. In $\mathbb{R}^n$, $\exp_x(v)=x+v$. In some cases, one can also define the logarithmic map $\log_x:\mathcal{M}\to T_x\mathcal{M}$ as the inverse of $\exp_x$.

\paragraph{Parallel transport.} In the Euclidean space, if one wants to transport a vector $v$ from $x$ to $y$, one simply translates $v$ along the straight-line from $x$ to $y$. In a Riemannian manifold, the resulting transported vector will depend on which path was taken from $x$ to $y$. The parallel transport $P_{x}(v;w)$ of a vector $v$ from a point $x$ in the direction $w$ and in a unit time, gives a canonical way to transport $v$ with zero acceleration along a geodesic starting from $x$, with initial velocity $w$.

\subsection{Riemannian optimization}\label{ssec:ropt}
Consider performing an \textsc{sgd} update of the form 
\begin{equation}\label{eq:sgd}
x_{t+1} \leftarrow x_t -\alpha g_t,
\end{equation}
where $g_t$ denotes the gradient of objective $f_t$\footnote{to be interpreted as the objective with the same parameters, evaluated at the minibatch taken at time $t$.} and $\alpha>0$ is the step-size. In a Riemannian manifold $(\mathcal{M},\rho)$, for smooth $f:\mathcal{M}\to\mathbb{R}$, \cite{bonnabel2013stochastic} defines Riemannian \textsc{sgd} by the following update:
\begin{equation}\label{eq:rsgd}
x_{t+1} \leftarrow \exp_{x_t}(-\alpha g_t),
\end{equation}
where $g_t\in T_{x_t}\mathcal{M}$ denotes the \textit{Riemannian} gradient of $f_t$ at $x_t$. Note that when $(\mathcal{M},\rho)$ is the Euclidean space $(\mathbb{R}^n,\textbf{I}_n)$, these two match, since we then have $\exp_x(v)=x+v$. 

Intuitively, applying the exponential map enables to perform an update along the shortest path in the relevant direction in unit time, while remaining in the manifold.

In practice, when $\exp_x(v)$ is not known in closed-form, it is common to replace it by a \textit{retraction map} $R_x(v)$, most often chosen as $R_x(v)= x+v$, which is a first-order approximation of $\exp_x(v)$.

\subsection{\textsc{Amsgrad}, \textsc{Adam}, \textsc{Adagrad}}

Let's recall here the main algorithms that we are taking interest in. 

\paragraph{\textsc{Adagrad}.} Introduced by \cite{duchi2011adaptive}, the standard form of its update step is defined as\footnote{a small $\varepsilon=10^{-8}$ is often added in the square-root for numerical stability, omitted here for simplicity.}
\begin{equation}\label{eq:adagrad}
x_{t+1}^i\leftarrow x_t^i - \alpha g_t^i/\sqrt{\sum_{k=1}^t (g_k^i)^2}.
\end{equation}
Such updates rescaled coordinate-wise depending on the size of past gradients can yield huge improvements when gradients are sparse, or in deep networks where the size of a good update may depend on the layer. However, the accumulation of \textit{all} past gradients can also slow down learning.

\paragraph{\textsc{Adam}.} Proposed by \cite{kingma2014adam}, the \textsc{Adam} update rule is given by
\begin{equation}\label{eq:adam}
x_{t+1}^i\leftarrow x_t^i - \alpha m_t^i/\sqrt{v_t^i},
\end{equation}
where $m_t=\beta_1 m_{t-1} + (1-\beta_1) g_t$ can be seen as a momentum term and $v_t^i=\beta_2 v_{t-1}^i + (1-\beta_2)(g_t^i)^2$ is an adaptivity term. When $\beta_1=0$, one essentially recovers the unpublished method \textsc{Rmsprop} \citep{tieleman2012lecture}, the only difference to \textsc{Adagrad} being that the sum is replaced by an exponential moving average, hence past gradients are forgotten over time in the adaptivity term $v_t$. This circumvents the issue of \textsc{Adagrad} that learning could stop too early when the sum of accumulated squared gradients is too significant. Let us also mention that the momentum term introduced by \textsc{Adam} for $\beta_1\neq 0$ has been observed to often yield huge empirical improvements.

\paragraph{\textsc{Amsgrad}.} More recently, \cite{reddi2018convergence} identified a mistake in the convergence proof of \textsc{Adam}. To fix it, they proposed to either modify the \textsc{Adam} algorithm with\footnote{with $m_t$ and $v_t$ defined by the same equations as in \textsc{Adam} (see above paragraph).} 
\begin{equation}\label{eq:amsgrad}
x_{t+1}^i\leftarrow x_t^i - \alpha m_t^i/\sqrt{\hat{v}_t^i},\quad \mathrm{where}\ \hat{v}_t^i=\max\{\hat{v}_{t-1}^i,v_t^i\},
\end{equation}
which they coin \textsc{Amsgrad}, or to choose an increasing schedule for $\beta_2$, making it time dependent, which they call \textsc{AdamNc} (for non-constant).
\section{Adaptive schemes in Riemannian manifolds}\label{sec:adaptive}

\subsection{The difficulty of designing adaptive schemes in the general setting}\label{ssec:difficulty}

\paragraph{Intrinsic updates.} It is easily understandable that writing any coordinate-wise update requires the choice of a coordinate system. However, on a Riemannian manifold $(\mathcal{M},\rho)$, one is generally not provided with a canonical coordinate system. The formalism only allows to work with certain local coordinate systems, also called \textit{charts}, and several different charts can be defined around each point $x\in\mathcal{M}$. One usually says that a quantity defined using a chart is \textit{intrinsic to $\mathcal{M}$} if its definition \textit{does not depend} on which chart was used. For instance, it is known that the Riemannian gradient $\mathrm{grad} f$ of a smooth function $f:\mathcal{M}\to\mathbb{R}$ can be defined intrinsically to $(\mathcal{M},\rho)$, but its Hessian is only intrinsically defined at critical points\footnote{because the Poisson bracket cancels at critical points \citep[part 1.2]{milnor1963morse}.}. It is easily seen that the \textsc{rsgd} update of Eq.~(\ref{eq:rsgd}) is intrinsic, since it only involves $\exp$ and $\mathrm{grad}$, which are objects intrinsic to $(\mathcal{M},\rho)$. However, it is unclear whether it is possible at all to express either of Eqs.~(\ref{eq:adagrad},\ref{eq:adam},\ref{eq:amsgrad}) in a coordinate-free or intrinsic manner. 

\paragraph{A tempting solution.} Note that since an update is defined in a tangent space, one could be tempted to fix a canonical coordinate system $e:=(e^{(1)},...,e^{(n)})$ in the tangent space $T_{x_0}\mathcal{M}\simeq\mathbb{R}^d$ at the  initialization $x_0\in\mathcal{M}$, and parallel-transport $e$ along the optimization trajectory, adapting Eq.~(\ref{eq:adagrad}) to:
\begin{equation}\label{eq:badadagrad}
x_{t+1}\leftarrow \exp_{x_t}(\Delta_t), \quad e_{t+1}\leftarrow P_{x_t}(e_t;\Delta_t), \quad \mathrm{with}\ \Delta_t:=-\alpha g_t\oslash \sqrt{\sum_{k=1}^t (g_k)^2},
\end{equation}
where $\oslash$ and $(\cdot)^2$ denote coordinate-wise division and square respectively, \textit{these operations being taken relatively to coordinate system $e_t$}. In the Euclidean space, parallel transport between two points $x$ and $y$ does not depend on the path it is taken along because the space has no curvature. However, in a general Riemannian manifold, not only does it depend on the chosen path but curvature will also give to parallel transport a rotational component\footnote{The rotational component of parallel transport inherited from curvature is called the \textit{holonomy}.}, which will almost surely break the sparsity of the gradients and hence the benefit of adaptivity. Besides, the interpretation of adaptivity as optimizing different features (\textit{i.e.} gradient coordinates) at different speeds is also completely lost here, since the coordinate system used to represent gradients depends on the optimization path. Finally, note that the techniques we used to prove our theorems would not apply to updates defined in the vein of Eq.~(\ref{eq:badadagrad}).
\subsection{Adaptivity is possible across manifolds in a product}\label{sec:possible}

From now on, we assume additional structure on $(\mathcal{M},\rho)$, namely that it is the cartesian product of $n$ Riemannian manifolds $(\mathcal{M}_{i},\rho^{i})$,  where $\rho$ is the induced product metric:
\begin{equation}\label{eq:product}
\mathcal{M}:=\mathcal{M}_{1}\times\cdots\times\mathcal{M}_{n},\quad \rho:= \begin{bmatrix}
 \rho^{1} & & \\
     & \ddots & \\
     & & \rho^{n}
\end{bmatrix}.
\end{equation}
\paragraph{Product notations.} The induced distance function $d$ on $\mathcal{M}$ is known to be given by $d(x,y)^2=\sum_{i=1}^n d^i(x^i,y^i)^2$, where $d^i$ is the distance in $\mathcal{M}_i$. The tangent space at $x=(x^1,...,x^n)$ is given by $T_x\mathcal{M}=T_{x^1}\mathcal{M}_1\oplus\cdots\oplus T_{x^n}\mathcal{M}_n$, and the Riemannian gradient $g$ of a smooth function $f:\mathcal{M}\to\mathbb{R}$ at point $x\in\mathcal{M}$ is simply the concatenation $g=((g^1)^T\cdots (g^n)^T)^T$ of the Riemannian gradients $g^i\in T_{x^i}\mathcal{M}_i$ of each partial map $f^i:y\in\mathcal{M}_i\mapsto f(x^1,...,x^{i-1},y,x^{i+1},...,x^n)$. Similarly, the exponential, log map and the parallel transport in $\mathcal{M}$ are the concatenations of those in each $\mathcal{M}_i$.

\paragraph{Riemannian \textsc{Adagrad}.} We just saw in the above discussion that designing meaningful adaptive schemes $-$ intuitively corresponding to one learning rate per coordinate $-$ in a general Riemannian manifold was difficult, because of the absence of intrinsic coordinates. Here, we propose to see each component $x^i\in\mathcal{M}^i$ of $x$ as a ``coordinate'', yielding a simple adaptation of Eq.~(\ref{eq:adagrad}) as
\begin{equation}\label{eq:radagrad}
x_{t+1}^i\leftarrow \exp^i_{x_t^i}\left(- \alpha g_t^i/\sqrt{\sum_{k=1}^t \Vert g_k^i\Vert_{x^i_k}^2}\right).
\end{equation} 

\paragraph{On the adaptivity term.} Note that we take (squared) \textit{Riemannian} norms $\Vert g_t^i\Vert_{x^i_t}^2=\rho^i_{x^i_t}(g_t^i,g_t^i)$ in the adaptivity term rescaling the gradient. In the Euclidean setting, this quantity is simply a scalar $( g_t^i)^2$, which is related to the size of an \textsc{sgd} update of the $i^{th}$ coordinate, rescaled by the learning rate (see Eq.~(\ref{eq:sgd})): $\vert g_t^i\vert=\vert x_{t+1}^i-x_t^i\vert/\alpha$. By analogy, note that the size of an \textsc{rsgd} update in $\mathcal{M}_i$ (see Eq.~(\ref{eq:rsgd})) is given by $d^i(x_{t+1}^i,x_t^i)=d^i(\exp^i_{x_t^i}(-\alpha g_t^i),x_t^i)=\Vert -\alpha g_t^i\Vert_{x^i_t}$, hence we also recover $\Vert g_t^i\Vert_{x^i_t}=d^i(x_{t+1}^i,x_t^i)/\alpha$, which indeed suggests replacing the scalar $(g_t^i)^2$ by $\Vert g_t^i\Vert_{x^i_t}^2$ when transforming a coordinate-wise adaptive scheme into a manifold-wise adaptive one.

\section{\textsc{Ramsgrad}, \textsc{RadamNc}: convergence guarantees}\label{sec:theorems}

In section~\ref{sec:prelim}, we briefly presented \textsc{Adagrad}, \textsc{Adam} and \textsc{Amsgrad}. Intuitively, \textsc{Adam} can be described as a combination of \textsc{Adagrad} with a momentum (of parameter $\beta_1$), with the slight modification that the sum of the past squared-gradients is replaced with an exponential moving average, for an exponent $\beta_2$. Let's also recall that \textsc{Amsgrad} implements a slight modification of \textsc{Adam}, allowing to correct its convergence proof. Finally, \textsc{AdamNc} is simply \textsc{Adam}, but with a particular non-constant schedule for $\beta_1$ and $\beta_2$. On the other hand, what is interesting to note is that the schedule initially proposed by \cite{reddi2018convergence} for $\beta_2$ in \textsc{AdamNc}, namely $\beta_{2t}:=1-1/t$, lets $v_t$ recover the sum of squared-gradients of \textsc{Adagrad}. Hence, \textsc{AdamNc} without momentum (\textit{i.e.} $\beta_{1t}=0$) yields \textsc{Adagrad}. 
\paragraph{Assumptions and notations.} For $1\leq i\leq n$, we assume $(\mathcal{M}_{i},\rho^{i})$ is a geodesically complete Riemannian manifold with sectional curvature lower bounded by $\kappa_i\leq 0$. As written in Eq.~(\ref{eq:product}), let $(\mathcal{M},\rho)$ be the product manifold of the $(\mathcal{M}_i,\rho^i)$'s. For each $i$, let $\mathcal{X}_{i}\subset \mathcal{M}_{i}$ be a compact, geodesically convex set and define $\mathcal{X}:=\mathcal{X}_{1}\times\cdots\times\mathcal{X}_{n}$, the set of feasible parameters. Define $\Pi_{\mathcal{X}_i}:\mathcal{M}_i\to\mathcal{X}_i$ to be the projection operator, \textit{i.e.} $\Pi_{\mathcal{X}_i}(x)$ is the unique $y\in\mathcal{X}_i$ minimizing $d^i(y,x)$. Denote by $P^i$, $\exp^i$ and $\log^i$ the parallel transport, exponential and log maps in $(\mathcal{M}_i,\rho^i)$, respectively. For $f:\mathcal{M}\to\mathbb{R}$, if $g=\mathrm{grad} f(x)$ for $x\in\mathcal{M}$, denote by $x^i\in\mathcal{M}_i$ and by $g^i\in T_{x^i}\mathcal{M}_i$ the corresponding components of $x$ and $g$. In the sequel, let $(f_t)$ be a family of differentiable, geodesically convex functions from $\mathcal{M}$ to $\mathbb{R}$. Assume that each $\mathcal{X}_i\subset\mathcal{M}_i$ has a diameter bounded by $D_\infty$ and that for all $1\leq i\leq n$, $t\in [T]$ and $x\in\mathcal{X}$, $\Vert(\mathrm{grad} f_t(x))^i\Vert_{x_i}\leq G_\infty$. Finally, our convergence guarantees will bound the regret, defined at the end of $T$ rounds as
$R_T = \sum_{t=1}^T f_t(x_t) - \min_{x\in\mathcal{X}}\sum_{j=1}^T f_j(x)$, so that $R_T=o(T)$. Finally, $\varphi^i_{x^i\to y^i}$ denotes any isometry from $T_{x^i}\mathcal{M}_i$ to $T_{y^i}\mathcal{M}_i$, for $x^i,y^i\in\mathcal{M}_i$.

Following the discussion in section~\ref{sec:possible} and especially Eq.~(\ref{eq:radagrad}), we present Riemannian \textsc{Amsgrad} in Figure~\ref{alg:alg-1}. For comparison, we show next to it the standard \textsc{Amsgrad} algorithm in Figure~\ref{alg:alg-2}.

       \begin{figure}[!ht]
      
         \centering
        \begin{subfigure}{.45\textwidth}
          \vline
       \begin{algorithmic}
\Require $x_1\in\mathcal{X}$, $\lbrace\alpha_t\rbrace_{t=1}^T$, $\lbrace\beta_{1t}\rbrace_{t=1}^T$, $\beta_2$\\
Set $m_0=0$, $\tau_0=0$, $v_0=0$ and $\hat{v}_0=0$
\For{$t=1$ to $T$} (for all $1\leq i\leq n$)
\State $g_t=\mathrm{grad} f_t(x_t)$ 
\State $m_t^i=\beta_{1t}\tau_{t-1}^i+(1-\beta_{1t}) g_t^i$
\State $v_t^i=\beta_2 v_{t-1}^i + (1-\beta_2)\Vert g_t^{i}\Vert_{x_t^i}^2$
\State $\hat{v}_t^i=\max\{\hat{v}_{t-1}^i,v_t^i\}$
\State $x_{t+1}^i=\Pi_{\mathcal{X}_i}(\exp^i_{x_{t}^i}(-\alpha_t m_t^i/\sqrt{\hat{v}_t^i}))$
\State $\tau_t^i=\varphi^i_{x_t^i\to x_{t+1}^i}(m_t^i)$\\
\textbf{end\ for}
\EndFor\\
\end{algorithmic}

           \caption{\textsc{Ramsgrad} in $\mathcal{M}_1\times\cdots\times\mathcal{M}_n$.}\label{alg:alg-1}
         \end{subfigure}
    		\hfill
         \begin{subfigure}{.45\textwidth}
           \centering
          \begin{algorithmic}
\Require $x_1\in\mathcal{X}$, $\lbrace\alpha_t\rbrace_{t=1}^T$, $\lbrace\beta_{1t}\rbrace_{t=1}^T$, $\beta_2$\\
Set $m_0=0$, $v_0=0$ and $\hat{v}_0=0$
\For{$t=1$ to $T$} (for all $1\leq i\leq n$)
\State $g_t=\mathrm{grad} f_t(x_t)$ 
\State $m_t^i=\beta_{1t}m_{t-1}^i+(1-\beta_{1t}) g_t^i$
\State $v_t^i=\beta_2 v_{t-1}^i + (1-\beta_2)(g_t^{i})^2$
\State $\hat{v}_t^i=\max\{\hat{v}_{t-1}^i,v_t^i\}$
\State $x_{t+1}^i=\Pi_{\mathcal{X}_i}(x_{t}^i-\alpha_t m_t^i/\sqrt{\hat{v}_t^i})$\\
\textbf{end\ for}
\EndFor\\
\end{algorithmic}
           \caption{\textsc{Amsgrad} in $\mathbb{R}^n$.}\label{alg:alg-2}
         \end{subfigure}
         \caption{Comparison of the Riemannian and Euclidean versions of \textsc{Amsgrad}.}\label{fig:twoalg}
         
       \end{figure}
       
       Write $h_t^i:=-\alpha_t m_t^i/\sqrt{\hat{v}_t^i}$. As a natural choice for $\varphi^i$, one could first parallel-transport\footnote{The idea of parallel-transporting $m_t$ from $T_{x_t}\mathcal{M}$ to $T_{x_{t+1}}\mathcal{M}$ previously appeared in \cite{cho2017riemannian}.} $m_t^i$ from $x_{t}^i$ to $\tilde{x}^i_{t+1}:=\exp_{x_t^i}(h_t^i)$ using $P^i(\cdot\,;h_t^i)$, and then from $\tilde{x}_{t+1}^i$ to $x_{t+1}^i$ along a minimizing geodesic. 
       
As can be seen, if $(\mathcal{M}_i,\rho_i)=\mathbb{R}$ for all $i$, \textsc{Ramsgrad} and \textsc{Amsgrad} coincide: we then have $\kappa_i=0$, $d^i(x^i,y^i)=\vert x^i-y^i\vert$, $\varphi^i=Id$, $\exp^i_{x^i}(v^i)=x^i+v^i$, $\mathcal{M}_1\times\cdots\times\mathcal{M}_n=\mathbb{R}^n$, $\Vert g_t^i\Vert_{x_t^i}^2=(g_t^i)^2\in\mathbb{R}$. From these algorithms, \textsc{Radam} and \textsc{Adam} are obtained simply by removing the $\max$ operations, \textit{i.e.} replacing $\hat{v}_t^i=\max\{\hat{v}_{t-1}^i,v_t^i\}$ with $\hat{v}_t^i=v_t^i$. The convergence guarantee that we obtain for \textsc{Ramsgrad} is presented in Theorem~\ref{thm:RAMSgrad}, where the quantity $\zeta$ is defined by \cite{zhang2016first} as 
\begin{equation}\label{eq:zeta}
\zeta(\kappa,c):=\dfrac{c\sqrt{\vert\kappa\vert}}{\tanh(c\sqrt{\vert\kappa\vert})}= 1 + \frac{c}{3}\vert\kappa\vert+\mathcal{O}_{\kappa\to 0}(\kappa^2).
\end{equation} 
For comparison, we also show the convergence guarantee of the original \textsc{Amsgrad} in appendix~\ref{sec:AMSgrad}. Note that when $(\mathcal{M}_i,\rho_i)=\mathbb{R}$ for all $i$, convergence guarantees between \textsc{Ramsgrad} and \textsc{Amsgrad} coincide as well. Indeed, the curvature dependent quantity $(\zeta(\kappa_i,D_\infty)+1)/2$ in the Riemannian case then becomes equal to $1$, recovering the convergence theorem of \textsc{Amsgrad}. It is also interesting to understand at which speed does the regret bound worsen when the curvature is small but non-zero: by a multiplicative factor of approximately $1+D_\infty\vert\kappa\vert/6$ (see Eq.(\ref{eq:zeta})). Similar remarks hold for \textsc{RadamNc}, whose convergence guarantee is shown in Theorem~\ref{thm:RadamNc}. Finally, notice that $\beta_1:=0$ in Theorem~\ref{thm:RadamNc} yields a convergence proof for \textsc{Radagrad}, whose update rule we defined in Eq.~(\ref{eq:radagrad}).
\begin{theorem}[Convergence of \textsc{Ramsgrad}]\label{thm:RAMSgrad}
Let $(x_t)$ and $(\hat{v}_t)$ be the sequences obtained from Algorithm~\ref{alg:alg-1}, $\alpha_t=\alpha/\sqrt{t}$, $\beta_1=\beta_{11}$, $\beta_{1t}\leq\beta_1$ for all $t\in [T]$ and $\gamma =\beta_1/\sqrt{\beta_2} <1$. We then have:
\begin{multline}\label{eq:regret_ramsgrad}
R_T \leq \dfrac{\sqrt{T} D_\infty^2}{2\alpha(1-\beta_1)}\sum_{i=1}^n \sqrt{\hat{v}_T^i} + \dfrac{D_\infty^2}{2(1-\beta_1)}\sum_{i=1}^n\sum_{t=1}^T\beta_{1t}\dfrac{\sqrt{\hat{v}_t^i}}{\alpha_t}+\\
\dfrac{\alpha\sqrt{1+\log T}}{(1-\beta_1)^2(1-\gamma)\sqrt{1-\beta_2}}\sum_{i=1}^n\dfrac{\zeta(\kappa_i,D_\infty)+1}{2}\sqrt{\sum_{t=1}^T\Vert g^i_t\Vert_{x_t^i}^2}.
\end{multline}
\end{theorem}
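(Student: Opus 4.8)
The plan is to mirror the regret analysis of Euclidean \textsc{Amsgrad} from \cite{reddi2018convergence}, replacing the Euclidean law of cosines by its Riemannian analogue from \cite{zhang2016first} on manifolds of curvature bounded below. The one genuinely new ingredient is this curvature comparison, which will inject the factor $\zeta(\kappa_i,D_\infty)$; everything else should transfer verbatim because the maps $\varphi^i_{x_t^i\to x_{t+1}^i}$ are isometries and hence preserve the norms and inner products in which the momentum bookkeeping is phrased. First I would fix a comparator $x^*\in\argmin_{x\in\mathcal{X}}\sum_j f_j(x)$ and use geodesic convexity of each $f_t$ on the product, which decomposes componentwise into
\[
f_t(x_t)-f_t(x^*)\leq \sum_{i=1}^n \langle g_t^i,\,-\log^i_{x_t^i}((x^*)^i)\rangle_{x_t^i},
\]
so it suffices to bound the right-hand side summed over $t$.

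The curvature step is the heart of the matter. For each $i$ I would apply the comparison lemma of \cite{zhang2016first} to the geodesic triangle $(x_t^i,\tilde x_{t+1}^i,(x^*)^i)$ with $\tilde x_{t+1}^i=\exp^i_{x_t^i}(h_t^i)$, choosing the labeling so that the nondecreasing factor $\zeta$ is evaluated at $d^i(x_t^i,(x^*)^i)\leq D_\infty$ and multiplies the step term $\|h_t^i\|_{x_t^i}^2$:
\[
d^i(\tilde x_{t+1}^i,(x^*)^i)^2\leq d^i(x_t^i,(x^*)^i)^2+\zeta(\kappa_i,D_\infty)\,\|h_t^i\|_{x_t^i}^2-2\langle h_t^i,\log^i_{x_t^i}((x^*)^i)\rangle_{x_t^i}.
\]
Non-expansiveness of $\Pi_{\mathcal{X}_i}$ replaces $\tilde x_{t+1}^i$ by $x_{t+1}^i$ on the left, and substituting $h_t^i=-\alpha_t m_t^i/\sqrt{\hat v_t^i}$ and rearranging gives a per-step bound on $\langle m_t^i,-\log^i_{x_t^i}((x^*)^i)\rangle_{x_t^i}$ as a telescoping distance difference weighted by $\sqrt{\hat v_t^i}/(2\alpha_t)$ plus the adaptivity piece $\tfrac12\zeta(\kappa_i,D_\infty)\,\alpha_t\|m_t^i\|_{x_t^i}^2/\sqrt{\hat v_t^i}$.

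Next I would pass from $m_t^i$ to $g_t^i$. Writing $m_t^i=\beta_{1t}\tau_{t-1}^i+(1-\beta_{1t})g_t^i$ and using that $\varphi^i$ is an isometry, so $\|\tau_{t-1}^i\|_{x_t^i}=\|m_{t-1}^i\|_{x_{t-1}^i}$, I solve for the $g_t^i$ inner product and control the transported term $\beta_{1t}\langle\tau_{t-1}^i,\log^i_{x_t^i}((x^*)^i)\rangle_{x_t^i}$ by Cauchy--Schwarz and then Young's inequality with weight $\sqrt{\hat v_t^i}/\alpha_t$. This is precisely where the $+1$ is born: the curvature-free Young term contributes a coefficient $1$ on $\alpha_t\|\tau_{t-1}^i\|_{x_t^i}^2/\sqrt{\hat v_t^i}$ next to the $\zeta(\kappa_i,D_\infty)$ from the previous step, and since $\alpha_t/\sqrt{\hat v_t^i}$ is nonincreasing in $t$, a reindexing bounds $\sum_t\alpha_t\|m_{t-1}^i\|^2/\sqrt{\hat v_t^i}$ by the same quantity as $\sum_t\alpha_t\|m_t^i\|^2/\sqrt{\hat v_t^i}$, merging the two into the prefactor $(\zeta(\kappa_i,D_\infty)+1)/2$. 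Summing over $t$ and telescoping the distance differences by Abel summation, where the monotonicity of $\sqrt{\hat v_t^i}/\alpha_t$ (guaranteed by the $\max$ and by $\alpha_t=\alpha/\sqrt t$) together with $d^i(\cdot,\cdot)^2\leq D_\infty^2$ collapses the sum to $\tfrac{\sqrt T D_\infty^2}{2\alpha}\sqrt{\hat v_T^i}$, produces the first regret term, while the $\beta_{1t}$-weighted remainders give the second.

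The remaining and most laborious step is the Riemannian adaptivity lemma
\[
\sum_{t=1}^T \frac{\alpha_t\|m_t^i\|_{x_t^i}^2}{\sqrt{\hat v_t^i}}\leq \frac{\alpha\sqrt{1+\log T}}{(1-\beta_1)(1-\gamma)\sqrt{1-\beta_2}}\sqrt{\sum_{t=1}^T \|g_t^i\|_{x_t^i}^2},
\]
the analogue of the key estimate in \cite{reddi2018convergence}. I would expand $m_t^i$ as an isometry-weighted average of past gradients, use $\hat v_t^i\geq v_t^i\geq (1-\beta_2)\beta_2^{t-j}\|g_j^i\|_{x_j^i}^2$, swap the order of the double summation, sum the geometric series in $\gamma=\beta_1/\sqrt{\beta_2}$, bound $\sum_t 1/\sqrt t\leq 2\sqrt T$, and close with Cauchy--Schwarz to extract the $\sqrt{1+\log T}$. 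I expect this double-summation to be the main obstacle computationally; conceptually, the only manifold-specific check is that every parallel transport appearing in the expansion of $m_t^i$ is an isometry, which makes the estimate identical to the Euclidean one once all norms are read as Riemannian norms. Finally, dividing the whole chain by $(1-\beta_1)$ from the momentum step (this is what produces the $(1-\beta_1)^2$ denominator), applying the lemma to the $(\zeta(\kappa_i,D_\infty)+1)/2$-weighted adaptivity sum, and summing over $i=1,\dots,n$ yields exactly \eqref{eq:regret_ramsgrad}.
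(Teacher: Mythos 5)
Your proposal follows the paper's own proof essentially step for step: geodesic convexity to decompose the regret componentwise, the Zhang--Sra cosine inequality (lemma~\ref{lemma:alexandrov}) applied to the triangle $(x_t^i,\tilde x_{t+1}^i,x_*^i)$ with projection non-expansiveness, the momentum split $m_t^i=\beta_{1t}\tau_{t-1}^i+(1-\beta_{1t})g_t^i$ handled by Cauchy--Schwarz/Young (using that $\varphi^i$ is an isometry) to produce the $(\zeta(\kappa_i,D_\infty)+1)/2$ factor, telescoping under monotonicity of $\sqrt{\hat v_t^i}/\alpha_t$, and the identical adaptivity lemma bounding $\sum_t \alpha_t\Vert m_t^i\Vert_{x_t^i}^2/\sqrt{\hat v_t^i}$ (the paper's lemma~\ref{lemma:mT}). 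The only blemish is the superfluous intermediate bound $\sum_t 1/\sqrt{t}\le 2\sqrt{T}$ in your sketch of that lemma; the step actually needed, which you also state, is the closing Cauchy--Schwarz with $\sum_t 1/t\le 1+\log T$, so the argument is unaffected.
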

\begin{proof} See appendix~\ref{proof:RAMSgrad}.
\end{proof}
\begin{theorem}[Convergence of \textsc{RadamNc}]\label{thm:RadamNc}
Let $(x_t)$ and $(v_t)$ be the sequences obtained from \textsc{RadamNc}, $\alpha_t=\alpha/\sqrt{t}$, $\beta_1=\beta_{11}$, $\beta_{1t}=\beta_1\lambda^{t-1}$, $\lambda<1$, $\beta_{2t}=1-1/t$. We then have:
\begin{multline}
R_T \leq \sum_{i=1}^n\left(\dfrac{D_\infty}{2\alpha(1-\beta_1)}+\dfrac{\alpha(\zeta(\kappa_i,D_\infty)+1)}{(1-\beta_1)^3}\right)\sqrt{\sum_{t=1}^T\Vert g^i_t\Vert_{x_t^i}^2}+\dfrac{\beta_1 D_\infty^2 G_\infty n}{2\alpha(1-\beta_1)(1-\lambda)^2}.
\end{multline}
\end{theorem}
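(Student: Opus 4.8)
The plan is to mirror the Euclidean \textsc{Amsgrad} analysis of \citep{reddi2018convergence}, substituting each Euclidean ingredient by its Riemannian analogue: straight segments become geodesics, the law of cosines becomes the comparison inequality valid in spaces of curvature bounded below, and translations become parallel transport through the isometries $\varphi^i$. Because the metric, the gradient and the update all split across the product $\mathcal{M}=\mathcal{M}_1\times\cdots\times\mathcal{M}_n$, the regret decomposes as a sum over $i$, so it suffices to run the argument on one factor $\mathcal{M}_i$ and then sum. I would first invoke the first-order characterization of geodesic convexity of $f_t$: writing $x^*$ for a minimizer of $\sum_j f_j$ over $\mathcal{X}$, it gives $f_t(x_t)-f_t(x^*)\le -\sum_{i=1}^n\langle g_t^i,\log^i_{x_t^i}(x^{*i})\rangle_{x_t^i}$, hence $R_T\le\sum_i\sum_t\big(-\langle g_t^i,\log^i_{x_t^i}(x^{*i})\rangle_{x_t^i}\big)$. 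Everything then reduces, for each fixed $i$, to upper bounding the linearized per-step quantity $-\langle g_t^i,\log^i_{x_t^i}(x^{*i})\rangle_{x_t^i}$.

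\textbf{The key geometric step.} Fix $i$ and abbreviate $\tilde x_{t+1}^i:=\exp^i_{x_t^i}(h_t^i)$ with $h_t^i=-\alpha_t m_t^i/\sqrt{\hat v_t^i}$. I would apply the curvature-dependent ``law of cosines'' comparison inequality of \citep{zhang2016first} to the geodesic triangle with vertices $x_t^i$, $\tilde x_{t+1}^i$ and $x^{*i}$; here the lower curvature bound $\kappa_i\le 0$ is exactly what that inequality requires, and it is what injects the constant $\zeta$ of Eq.~(\ref{eq:zeta}). The side $x_t^i\to\tilde x_{t+1}^i$ has length $\|h_t^i\|_{x_t^i}$, the side $x_t^i\to x^{*i}$ has length $d^i(x_t^i,x^{*i})\le D_\infty$, and the angle between them encodes $\langle h_t^i,\log^i_{x_t^i}(x^{*i})\rangle_{x_t^i}$. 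Using that $\zeta(\kappa_i,\cdot)$ is nondecreasing, so its argument may be replaced by the diameter $D_\infty$, and that projection onto the geodesically convex $\mathcal{X}_i$ is nonexpansive (so $d^i(x_{t+1}^i,x^{*i})\le d^i(\tilde x_{t+1}^i,x^{*i})$), this produces the one-step bound $-\langle m_t^i,\log^i_{x_t^i}(x^{*i})\rangle_{x_t^i}\le\frac{\sqrt{\hat v_t^i}}{2\alpha_t}\big(d^i(x_t^i,x^{*i})^2-d^i(x_{t+1}^i,x^{*i})^2\big)+\frac{\zeta(\kappa_i,D_\infty)\,\alpha_t}{2\sqrt{\hat v_t^i}}\|m_t^i\|_{x_t^i}^2$. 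This is precisely the Euclidean step of \citep{reddi2018convergence}, except that the squared step term now carries the curvature factor $\zeta(\kappa_i,D_\infty)$.

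\textbf{Momentum, telescoping, and the momentum sum.} To pass from $m_t^i$ to $g_t^i$ I would substitute $m_t^i=\beta_{1t}\tau_{t-1}^i+(1-\beta_{1t})g_t^i$ with $\tau_{t-1}^i=\varphi^i_{x_{t-1}^i\to x_t^i}(m_{t-1}^i)$; since $\varphi^i$ is an isometry, $\|\tau_{t-1}^i\|_{x_t^i}=\|m_{t-1}^i\|_{x_{t-1}^i}$, so the norm recursions for $m_t^i$ and for $v_t^i$ are identical to the scalar Euclidean ones. The cross term $-\beta_{1t}\langle\tau_{t-1}^i,\log^i_{x_t^i}(x^{*i})\rangle$ lives in the genuine inner-product space $T_{x_t^i}\mathcal{M}_i$ and is handled by Young's inequality with no curvature cost, producing on one side the middle term $\tfrac{D_\infty^2}{2(1-\beta_1)}\sum_i\sum_t\beta_{1t}\sqrt{\hat v_t^i}/\alpha_t$ (via $\|\log^i_{x_t^i}(x^{*i})\|=d^i(x_t^i,x^{*i})\le D_\infty$) and on the other a second, curvature-free, copy of the $\|m^i\|^2$ term; combining this flat contribution with the curvature-weighted one from the cosine step is what assembles the factor $(\zeta(\kappa_i,D_\infty)+1)/2$, which collapses to $1$ when $\kappa_i=0$. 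Telescoping the distance differences by Abel summation, with $d^i\le D_\infty$ and the monotonicity of $t\mapsto\sqrt{\hat v_t^i}/\alpha_t$, yields the first term $\tfrac{\sqrt T D_\infty^2}{2\alpha(1-\beta_1)}\sum_i\sqrt{\hat v_T^i}$. Finally I would bound $\sum_t\alpha_t\|m_t^i\|^2/\sqrt{\hat v_t^i}$ exactly as in the Euclidean auxiliary lemma: expand $\|m_t^i\|\le\sum_{k=1}^t\big(\prod_{j=k+1}^t\beta_{1j}\big)(1-\beta_{1k})\|g_k^i\|$, apply Cauchy--Schwarz, exchange the order of summation, and use $\sqrt{\hat v_t^i}\ge\sqrt{1-\beta_2}\,\beta_2^{(t-k)/2}\|g_k^i\|$ with $\gamma=\beta_1/\sqrt{\beta_2}<1$ and $\sum_{t\le T}1/t\le 1+\log T$, giving the $\sqrt{1+\log T}$, $(1-\gamma)^{-1}$, $(1-\beta_2)^{-1/2}$ factors and the terminal $\sqrt{\sum_t\|g_t^i\|_{x_t^i}^2}$.

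\textbf{Main obstacle.} The only genuinely Riemannian difficulty is the key geometric step: in a curved space the law of cosines fails, and one must replace it by the curvature comparison inequality of \citep{zhang2016first}, which is exactly what forces the appearance of $\zeta(\kappa_i,D_\infty)$ and confines the clean telescoping to the setting where $\exp^i$, $\log^i$ are globally well behaved and projection onto a geodesically convex set is nonexpansive. Once this single inequality is secured, the isometry property of $\varphi^i$ renders every remaining estimate — the momentum norm recursion, the telescoping, and the momentum-sum lemma — formally identical to the Euclidean computation, so that the rest is bookkeeping rather than new ideas.
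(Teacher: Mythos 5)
Your geometric groundwork (geodesic convexity to reduce the regret to bounding $-\langle g_t^i,\log^i_{x_t^i}(x_*^i)\rangle_{x_t^i}$, the curvature comparison inequality of \cite{zhang2016first} together with nonexpansiveness of the projection, and the isometry property of $\varphi^i$) matches the paper's proof and is sound. But from that point on you run the analysis of \textsc{Ramsgrad} (constant $\beta_2$, maximum-based $\hat v_t$) rather than of \textsc{RadamNc}, and the stated bound cannot be reached that way: your final estimate carries the factors $\sqrt{1+\log T}$, $(1-\gamma)^{-1}$ and $(1-\beta_2)^{-1/2}$, none of which appear in the theorem, and it misses the $T$-independent term $\beta_1 D_\infty^2 G_\infty n/\bigl(2\alpha(1-\beta_1)(1-\lambda)^2\bigr)$ that does. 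The telltale sign is that you never use the two hypotheses that define \textsc{RadamNc}, namely $\beta_{2t}=1-1/t$ and $\beta_{1t}=\beta_1\lambda^{t-1}$. Concretely, two of your steps fail in this setting. First, the lower bound $\sqrt{\hat v_t^i}\ge\sqrt{1-\beta_2}\,\beta_2^{(t-k)/2}\Vert g_k^i\Vert_{x_k^i}$ presupposes a constant $\beta_2$; here there is no such constant (taken literally, $1-\beta_{2t}=1/t\to 0$, and $\gamma=\beta_1/\sqrt{\beta_2}$ is undefined), so the geometric-decay argument behind lemma~\ref{lemma:mT} has no analogue. Second, you leave the cross-term contribution $\sum_t\beta_{1t}\sqrt{\hat v_t^i}/\alpha_t$ standing in the regret bound, whereas in the theorem it has been absorbed into a constant; that absorption requires $\sqrt{v_t^i}\le G_\infty$ and $\sum_t\sqrt{t}\,\lambda^{t-1}\le(1-\lambda)^{-2}$, i.e.\ exactly the exponential decay of $\beta_{1t}$ that you never invoke.

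The missing idea is that the schedule $\beta_{2t}=1-1/t$ makes the adaptivity term an exact arithmetic mean, $v_t^i=\frac{1}{t}\sum_{k=1}^t\Vert g_k^i\Vert_{x_k^i}^2$ (and \textsc{RadamNc} has no $\max$ step, so $\hat v_t=v_t$ throughout). This identity does three jobs at once: \emph{(i)} combined with $\alpha_T=\alpha/\sqrt{T}$ it turns the telescoped first term $\frac{\sqrt{T}D_\infty^2}{2\alpha(1-\beta_1)}\sum_i\sqrt{v_T^i}$ into a multiple of $\sum_i\sqrt{\sum_t\Vert g_t^i\Vert_{x_t^i}^2}$, which is the form appearing in the statement; \emph{(ii)} it makes $\alpha_t/\sqrt{v_t^i}=\alpha/\sqrt{\sum_{k\le t}\Vert g_k^i\Vert_{x_k^i}^2}$ automatically nonincreasing, which the telescoping and the Young-inequality bookkeeping need; and \emph{(iii)} it replaces your constant-$\beta_2$ momentum lemma by the paper's lemma~\ref{lemma:gT}, which bounds $\sum_t\frac{\alpha_t}{\sqrt{v_t^i}}\Vert m_t^i\Vert_{x_t^i}^2$ by $\frac{2\alpha}{(1-\beta_1)^2}\sqrt{\sum_t\Vert g_t^i\Vert_{x_t^i}^2}$ using the elementary inequality $\sum_j y_j/\sqrt{\sum_{k\le j}y_k}\le 2\sqrt{\sum_j y_j}$ (lemma~\ref{lemma:boring}, from \cite{auer2002adaptive}) — no logarithm, no $\gamma$, and a $(1-\beta_1)^{-3}$ constant once the prefactor $\frac{1}{2(1-\beta_{1t})}$ and the $(\zeta(\kappa_i,D_\infty)+1)$ weight are folded in. With those substitutions, and the bound $\sum_t\beta_{1t}\sqrt{v_t^i}/\alpha_t\le\frac{G_\infty}{\alpha}\sum_t\sqrt{t}\,\beta_1\lambda^{t-1}\le\frac{\beta_1 G_\infty}{\alpha(1-\lambda)^2}$ for the cross term, the rest of your argument goes through and yields the claimed inequality.
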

\begin{proof} See appendix~\ref{proof:RadamNc}.
\end{proof}
\paragraph{The role of convexity.} Note how the notion of convexity in Theorem~\ref{thm:AMSgrad} got replaced by the notion of geodesic convexity in Theorem~\ref{thm:RAMSgrad}. Let us compare the two definitions: the differentiable functions $f:\mathbb{R}^n\to\mathbb{R}$ and $g:\mathcal{M}\to\mathbb{R}$ are respectively \textit{convex} and \textit{geodesically convex} if for all $x,y\in\mathbb{R}^n$, $u,v\in\mathcal{M}$:
\begin{equation}\label{eq:convexity}
f(x)-f(y)\leq \langle \mathrm{grad}f(x),x-y\rangle,\quad g(u)-g(v)\leq \rho_{u}(\mathrm{grad}g(u),-\log_u(v)).
\end{equation} 
But how does this come at play in the proofs? Regret bounds for convex objectives are usually obtained by bounding $\sum_{t=1}^T f_t(x_t) -f_t(x_*)$ using Eq.~(\ref{eq:convexity}) for any $x_*\in\mathcal{X}$, which boils down to bounding each $\langle g_t,x_t-x_*\rangle$. In the Riemannian case, this term becomes $\rho_{x_t}(g_t,-\log_{x_t}(x_*))$.
\paragraph{The role of the cosine law.} How does one obtain a bound on $\langle g_t,x_t-x_*\rangle$? For simplicity, let us look at the particular case of an \textsc{sgd} update, from Eq.~(\ref{eq:sgd}). Using a cosine law, this yields 
\begin{equation}\label{eq:law}
\langle g_t,x_t-x_*\rangle=\dfrac{1}{2\alpha}\left(\Vert x_t-x_*\Vert^2 - \Vert x_{t+1}-x_*\Vert^2\right) + \dfrac{\alpha}{2}\Vert g_t\Vert^2.
\end{equation}
One now has two terms to bound: \textit{(i)} when summing over $t$, the first one simplifies as a telescopic summation; \textit{(ii)} the second term $\sum_{t=1}^T \alpha_t\Vert g_t\Vert^2$ will require a well chosen decreasing schedule for $\alpha$. In Riemannian manifolds, this step is generalized using the analogue lemma~\ref{lemma:alexandrov} introduced by \cite{zhang2016first}, valid in all Alexandrov spaces, which includes our setting of geodesically convex subsets of Riemannian manifolds with lower bounded sectional curvature. The curvature dependent quantity $\zeta$ of Eq.~(\ref{eq:regret_ramsgrad}) appears from this lemma, letting us bound $\rho_{x_t^i}^i(g_t^i,-\log_{x_t^i}^i(x_*^i))$.
\paragraph{The benefit of adaptivity.} Let us also mention that the above bounds significantly improve for sparse (per-manifold) gradients. In practice, this could happen for instance for algorithms embedding each word $i$ (or node of a graph) in a manifold $\mathcal{M}_i$ and when just a few words are updated at a time.

\paragraph{On the choice of $\varphi^i$.} The fact that our convergence theorems (see lemma~\ref{lemma:mT}) do not require specifying $\varphi^i$ suggests that the regret bounds could be improved by exploiting momentum/acceleration in the proofs for a particular $\varphi^i$. Note that this remark also applies to \textsc{Amsgrad} \citep{reddi2018convergence}.
\section{Experiments}\label{sec:exps}
We empirically assess the quality of the proposed algorithms: \textsc{Radam}, \textsc{Ramsgrad} and \textsc{Radagrad} compared to the non-adaptive \textsc{Rsgd} method (Eq.~\ref{eq:rsgd}). For this, we follow~\citep{nickel2017poincar} and embed the transitive closure of the WordNet noun hierarchy~\citep{miller1990introduction} in the $n$-dimensional Poincar\'e model $\mathbb{D}^n$ of hyperbolic geometry which is well-known to be better suited to embed tree-like graphs than the Euclidean space~\citep{gromov1987hyperbolic,desa2018representation}. In this case, each word is embedded in the same space of constant curvature $-1$, thus $\mathcal{M}_i = \mathbb{D}^n, \forall i$. Note that it would also be interesting to explore the benefit of our optimization tools for algorithms proposed in \citep{nickel2018learning,desa2018representation,ganea2018hyperbolic}. The choice of the Poincar\'e model is justified by the access to closed form expressions for all the quantities used in Alg.~\ref{alg:alg-1}:

\begin{itemize}
\item Metric tensor: $\rho_x = \lambda_x^2 \textbf{I}_n, \forall x \in \mathbb{D}^n$, where $\lambda_x = \frac{2}{1 - \|x\|^2}$  is the conformal factor.
\item Riemannian gradients are rescaled Euclidean gradients: $\mathrm{grad}f(x) = (1/\lambda_x^2) \nabla_Ef(x)$.
\item Distance function and geodesics, \citep{nickel2017poincar,ungar2008gyrovector,ganea2018hyperbolicnn}.
\item Exponential and logarithmic maps: $\exp_x(v) = x \oplus \left(\tanh\left(\frac{\lambda_x \|v\|}{2} \right) \frac{v}{\|v\|} \right)$, where $\oplus$ is the generalized Mobius addition~\citep{ungar2008gyrovector,ganea2018hyperbolicnn}.
\item Parallel transport along the unique geodesic from $x$ to $y$: $P_{x\to y}(v) =\frac{\lambda_x}{\lambda_y} \cdot \text{gyr}[y,-x]v$. This formula was derived from~\citep{ungar2008gyrovector,ganea2018hyperbolicnn}, $\text{gyr}$ being given in closed form in \cite[Eq.~(1.27)]{ungar2008gyrovector}.
\end{itemize}

\paragraph{Dataset \& Model. } The transitive closure of the WordNet taxonomy graph consists of 82,115 nouns and 743,241 hypernymy Is-A relations (directed edges $ \mathcal{E}$). These words are embedded in $\mathbb{D}^n$ such that the distance between words connected by an edge is minimized, while being maximized otherwise. We minimize the same loss function as~\citep{nickel2017poincar} which is similar with log-likelihood, but approximating the partition function using sampling of negative word pairs (non-edges), fixed to 10 in our case. Note that this loss does not use the direction of the edges in the graph\footnote{In a pair $(u,v)$, $u$ denotes the parent, \textit{i.e.} $u$ entails $v$ $-$ parameters $\theta$ are coordinates of all $u,v$.}
\begin{align}
\mathcal{L}(\theta) = \sum_{(u,v) \in \mathcal{E}} \frac{e^{-d_{\mathbb{D}}(u,v)}}{\sum_{u' \in \mathcal{N}(v)} e^{-d_{\mathbb{D}}(u',v)}}
\end{align}

\paragraph{Metrics.} We report both the loss value and the mean average precision (MAP)~\citep{nickel2017poincar}: for each directed edge $(u,v)$, we rank its distance $d(u,v)$ among the full set of ground truth negative examples $\{d(u',v) | (u',v) \notin \mathcal{E} \}$. We use the same two settings as~\citep{nickel2017poincar}, namely: \textbf{reconstruction} (measuring representation capacity) and \textbf{link prediction} (measuring generalization). For link prediction we sample a validation set of $2\%$ edges from the set of transitive closure edges that contain no leaf node or root. We only focused on 5-dimensional hyperbolic spaces.

\paragraph{Training details.} For all methods we use the same ``burn-in phase" described in~\citep{nickel2017poincar} for 20 epochs, with a fixed learning rate of 0.03 and using RSGD with retraction as explained in Sec.~\ref{ssec:ropt}. Solely during this phase, we sampled negative words based on their graph degree raised at power 0.75. This strategy improves all metrics. After that, when different optimization methods start, we sample negatives uniformly. We use $n=5$, following \cite{nickel2017poincar}.

\paragraph{Optimization methods.} Experimentally we obtained slightly better results for \textsc{Radam} over \textsc{Ramsgrad}, so we will mostly report the former. Moreover, we unexpectedly observed convergence to lower loss values when replacing the true exponential map with its first order approximation $-$ i.e. the retraction $R_x(v)=x+v$ $-$ in both \textsc{Rsgd} and in our adaptive methods from Alg.~\ref{alg:alg-1}. One possible explanation is that retraction methods need fewer steps and smaller gradients to ``escape" points sub-optimally collapsed on the ball border of $\mathbb{D}^n$ compared to fully Riemannian methods. As a consequence, we report ``retraction"-based methods in a separate setting as they are not directly comparable to their fully Riemannian analogues.

\begin{figure}[h]
\begin{tabular}{ccc}
\hspace{-0.5cm}
\includegraphics[width = 0.4\textwidth]{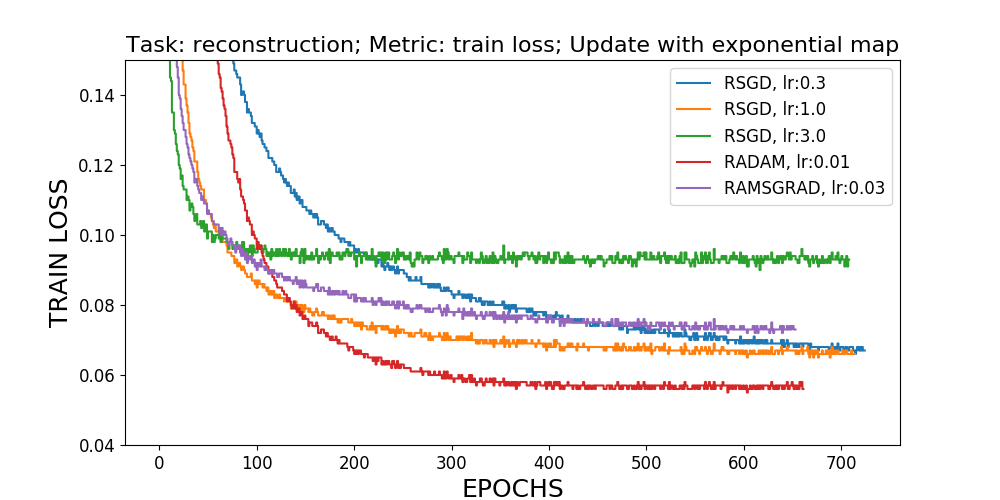} &
\includegraphics[width = 0.4\textwidth]{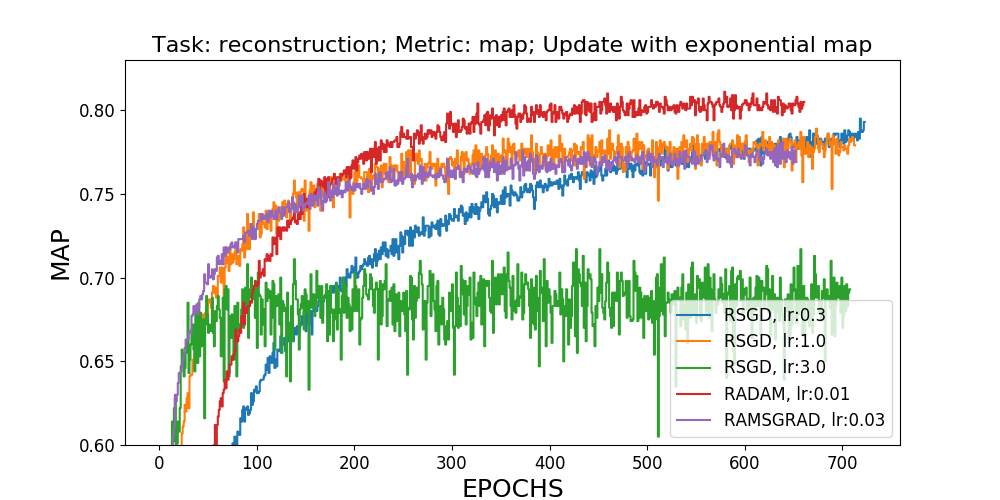} &
\includegraphics[width = 0.4\textwidth]{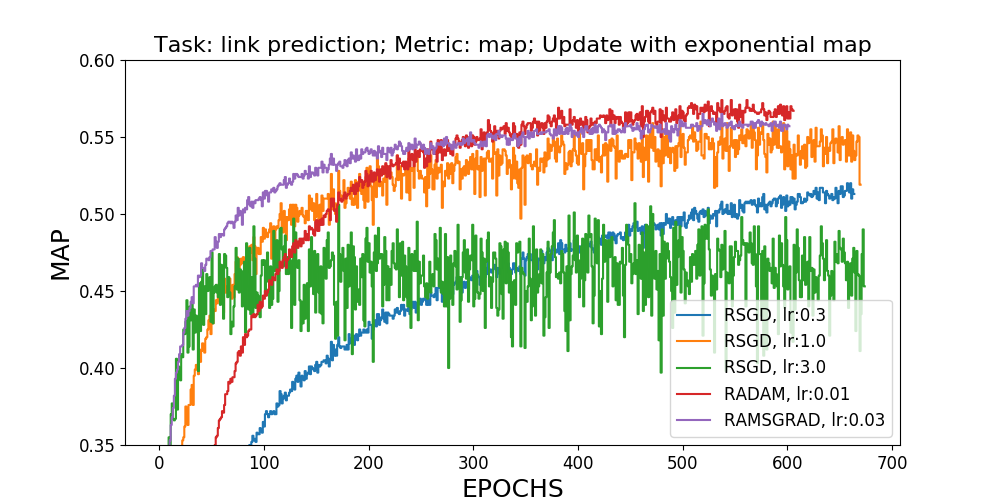}
\end{tabular}
\caption{Results for methods doing updates with the exponential map. From left to right we report: training loss, MAP on the train set, MAP on the validation set.}
\label{fig:full_exp}
\end{figure}

\begin{figure}[h]
\begin{tabular}{ccc}
\hspace{-0.5cm}
\includegraphics[width = 0.4\textwidth]{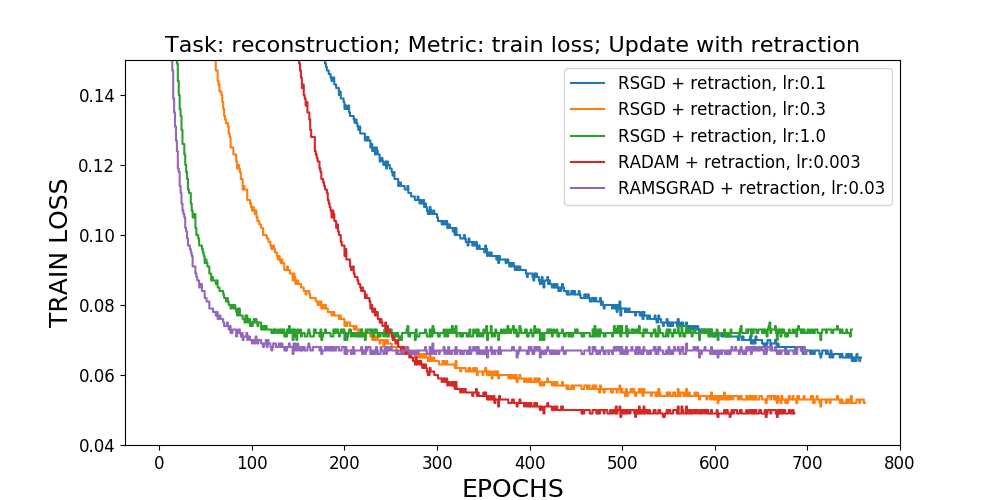} &
\includegraphics[width = 0.4\textwidth]{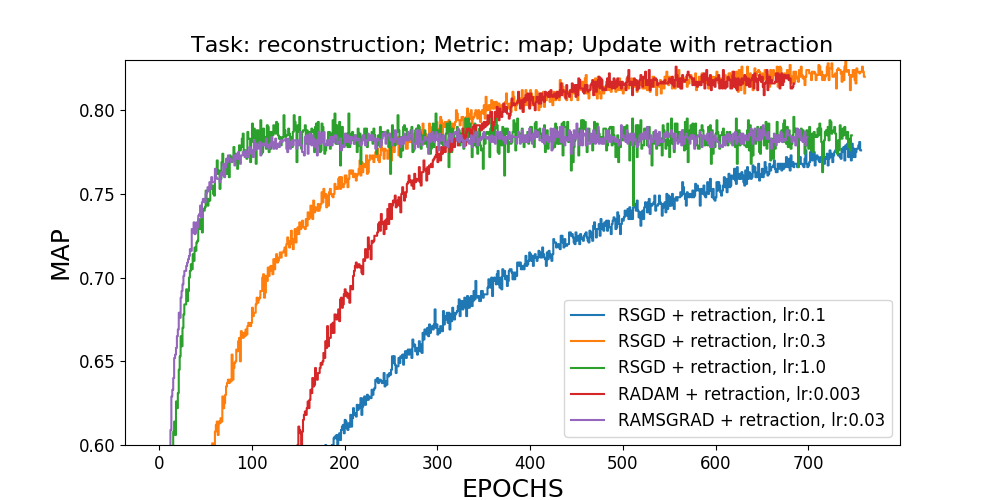} &
\includegraphics[width = 0.4\textwidth]{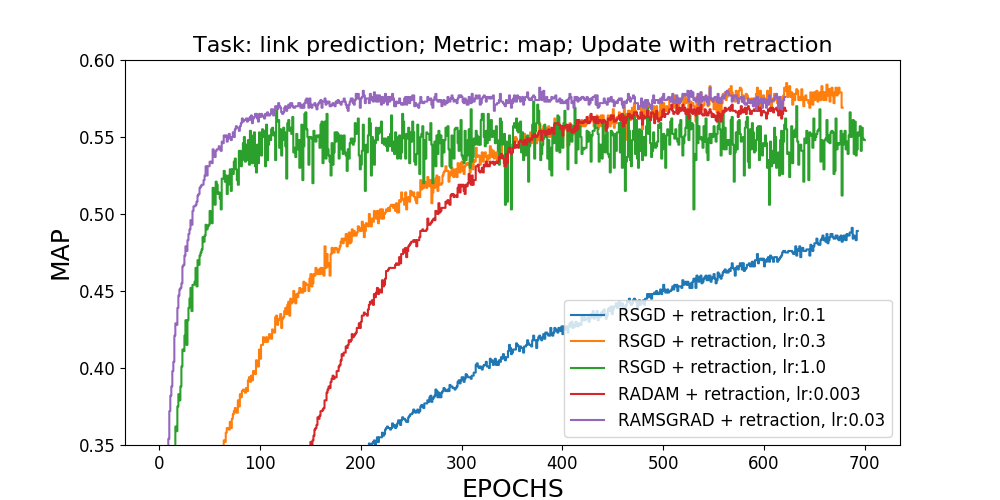}
\end{tabular}
\caption{Results for methods doing updates with the retraction. From left to right we report: training loss, MAP on the train set, MAP on the validation set.}
\label{fig:retraction}
\end{figure}

\paragraph{Results.} We show in Figures~\ref{fig:full_exp} and \ref{fig:retraction} results for ``exponential" based and ``retraction" based methods. We ran all our methods with different learning rates from the set \{0.001, 0.003, 0.01, 0.03, 0.1, 0.3, 1.0, 3.0\}. For the \textsc{Rsgd} baseline we show in orange the best learning rate setting, but we also show the previous lower (slower convergence, in blue) and the next higher (faster overfitting, in green) learning rates. For \textsc{Radam} and \textsc{Ramsgrad} we only show the best settings. We always use $\beta_1 = 0.9$ and $\beta_2 = 0.999$ for these methods as these achieved the lowest training loss. \textsc{Radagrad} was consistently worse, so we do not report it. As can be seen, \textsc{Radam} always achieves the lowest training loss. On the MAP metric for both reconstruction and link prediction settings, the same method also outperforms all the other methods for the full Riemannian setting (i.e. Tab.~\ref{fig:full_exp}). Interestingly, in the ``retraction" setting, \textsc{Radam} reaches the lowest training loss value and is on par with \textsc{Rsgd} on the MAP evaluation for both reconstruction and link prediction settings. However,  \textsc{Ramsgrad} is faster to converge in terms of MAP for the link prediction task, suggesting that this method has a better generalization capability.

\section{Related work}\label{sec:rel_work}

After Riemannian \textsc{sgd} was introduced by \cite{bonnabel2013stochastic}, a pletora of other first order Riemannian methods arose, such as Riemannian \textsc{svrg} \citep{zhang2016riemannian}, Riemannian Stein variational gradient descent \citep{liu2017riemannian}, Riemannian accelerated gradient descent \citep{NIPS2017_7072,zhang2018towards} or averaged \textsc{rsgd} \citep{tripuraneni2018averaging}, along with new methods for their convergence analysis in the geodesically convex case \citep{zhang2016first}. Stochastic gradient Langevin dynamics was generalized as well, to improve optimization on the probability simplex \citep{patterson2013stochastic}. 

Let us also mention that \cite{roy2018geometry} proposed Riemannian counterparts of SGD with momentum and RMSprop, suggesting to transport the momentum term using parallel translation, which is an idea that we preserved. However \textit{(i)} no convergence guarantee is provided and \textit{(ii)} their algorithm performs the coordinate-wise adaptive operations (squaring and division) w.r.t. a coordinate system in the tangent space, which, as we discussed in section~\ref{ssec:difficulty}, compromises the possibility of obtaining convergence guarantees. Finally, another version of Riemannian \textsc{Adam} for the Grassmann manifold $\mathcal{G}(1,n)$ was previously introduced by \cite{cho2017riemannian}, also transporting the momentum term using parallel translation. However, their algorithm completely removes the adaptive component, since the adaptivity term $v_t$ becomes a scalar. No adaptivity across manifolds is discussed, which is the main point of our discussion. Moreover, no convergence analysis is provided either. 

\section{Conclusion}

Driven by recent work in learning non-Euclidean embeddings for symbolic data, we propose to generalize popular adaptive optimization tools (\textit{e.g.} \textsc{Adam}, \textsc{Amsgrad}, \textsc{Adagrad}) to Cartesian products of Riemannian manifolds in a principled and intrinsic manner. We derive convergence rates that are similar to the Euclidean corresponding models. Experimentally we show that our methods outperform popular non-adaptive methods such as \textsc{Rsgd} on the realistic task of hyperbolic word taxonomy embedding.

\subsubsection*{Acknowledgments}
Gary B\'ecigneul is funded by the Max Planck ETH Center for Learning Systems. Octavian Ganea is funded by the Swiss National Science Foundation (SNSF) under grant agreement number 167176.

\bibliography{bib}
\bibliographystyle{iclr2019_conference}

\newpage
\appendix
\section{Proof of Theorem~\ref{thm:RAMSgrad}}\label{proof:RAMSgrad}

\begin{proof}
Denote by $\tilde{x}_{t+1}^i := \exp^i_{x_t^i}(-\alpha_t m_t^i/\sqrt{\hat{v}_t^i})$ and consider the geodesic triangle defined by $\tilde{x}_{t+1}^i$, $x_t^i$ and $x_*^i$. Now let $a=d^i(\tilde{x}_{t+1}^i,x_*^i)$, $b=d^i(\tilde{x}_{t+1}^i,x_t^i)$, $c=d^i(x_t^i,x_*^i)$ and $A=\angle \tilde{x}_{t+1}^i x_t^i x_*^i$. Combining the following formula\footnote{Note that since each $\mathcal{X}_i$ is geodesically convex, logarithms are well-defined.}:
\begin{equation}
d^i(x_t^i,\tilde{x}_{t+1}^i) d^i(x_t^i,x_*^i)\cos(\angle \tilde{x}_{t+1}^i x_t^i x_*^i) = \langle -\alpha_t m_t^i/\sqrt{\hat{v}_t^i},\log^i_{x_t^i}(x_*^i)\rangle_{x_t^i},
\end{equation}
with the following inequality (given by lemma~\ref{lemma:alexandrov}):
\begin{equation}
a^2\leq \zeta(\kappa,c) b^2 +c^2 -2bc\cos(A),\quad \mathrm{with}\ \ \ \zeta(\kappa,c):=\dfrac{\sqrt{\vert\kappa\vert}c}{\tanh(\sqrt{\vert\kappa\vert}c)},
\end{equation}
yields
\begin{multline}\label{eq:1}
\langle -m_t^i,\log^i_{x_t^i}(x_i^*)\rangle_{x_t^i}\leq \dfrac{\sqrt{\hat{v}_t^i}}{2\alpha_t}\left(d^i(x_t^i,x^i_*)^2-d^i(\tilde{x}_{t+1}^i,x^i_*)^2\right)\\
+ \zeta(\kappa_i,d^i(x_t^i,x^i_*))\dfrac{\alpha_t}{2\sqrt{\hat{v}_t^i}}\Vert m_t^i\Vert_{x_t^i}^2,
\end{multline}
where the use the notation $\langle\cdot,\cdot\rangle_{x^i}$ for $\rho^i_{x^i}(\cdot,\cdot)$ when it is clear which metric is used.
By definition of $\Pi_{\mathcal{X}_i}$, we can safely replace $\tilde{x}_{t+1}^i$ by $x_{t+1}^i$ in the above inequality. Plugging $m_t^i = \beta_{1t} \varphi^i_{x_{t-1}^i\to x_t^i}(m_{t-1}^i) + (1-\beta_{1t}) g_t^i$ into Eq.~(\ref{eq:1}) gives us
\begin{multline}
\langle -g_t^i,\log_{x_t^i}(x_i^*)\rangle_{x_t^i} \leq \dfrac{\sqrt{\hat{v}_t^i}}{2\alpha_t(1-\beta_{1t})}\left(d^i(x_t^i,x^i_*)^2-d^i(x_{t+1}^i,x^i_*)^2\right)\\
+\zeta(\kappa_i,d^i(x_t^i,x^i_*))\dfrac{\alpha_t}{2(1-\beta_{1t})\sqrt{\hat{v}_t^i}}\Vert m_t^i\Vert_{x_t^i}^2\\
+\dfrac{\beta_{1t}}{(1-\beta_{1t})}\langle \varphi^i_{x_{t-1}^i\to x_t^i}(m_{t-1}^i),\log_{x_t^i}(x_i^*) \rangle_{x_t^i}.
\end{multline}
Now applying Cauchy-Schwarz' and Young's inequalities to the last term yields 
\begin{multline}\label{eq:grad.log}
\langle -g_t^i,\log_{x_t^i}(x_i^*)\rangle_{x_t^i} \leq \dfrac{\sqrt{\hat{v}_t^i}}{2\alpha_t(1-\beta_{1t})}\left(d^i(x_t^i,x^i_*)^2-d^i(x_{t+1}^i,x^i_*)^2\right)\\
+\zeta(\kappa_i,d^i(x_t^i,x^i_*))\dfrac{\alpha_t}{2(1-\beta_{1t})\sqrt{\hat{v}_t^i}}\Vert m_t^i\Vert_{x_t^i}^2\\
+\dfrac{\beta_{1t}}{2(1-\beta_{1t})}\dfrac{\alpha_t}{\sqrt{\hat{v}_t^i}}\Vert m_{t-1}^i\Vert_{x_{t-1}^i}^2 + \dfrac{\beta_{1t}}{2(1-\beta_{1t})}\dfrac{\sqrt{\hat{v}_t^i}}{\alpha_t}\Vert\log_{x_t^i}(x_i^*) \Vert_{x_t^i}^2.
\end{multline}
From the geodesic convexity of $f_t$ for $1\leq t\leq T$, we have 
\begin{equation}\label{eq:g-conv}
\sum_{t=1}^T f_t(x_t)-f_t(x_*) \leq \sum_{t=1}^T \langle -g_t,\log_{x_t}(x_*)\rangle_{x_t}=\sum_{i=1}^n\sum_{t=1}^T \langle -g_t^i,\log^i_{x_t^i}(x_*^i)\rangle_{x_t^i}.
\end{equation}
Let's look at the first term. Using $\beta_{1t}\leq \beta_1$ and with a change of indices, we have
\begin{align}
&\sum_{i=1}^n\sum_{t=1}^T\dfrac{\sqrt{\hat{v}_t^i}}{2\alpha_t(1-\beta_{1t})}\left(d^i(x_t^i,x^i_*)^2-d^i(x_{t+1}^i,x^i_*)^2\right)\\ &\leq
\dfrac{1}{2(1-\beta_1)}\left[\sum_{i=1}^n\sum_{t=2}^T\left(\dfrac{\sqrt{\hat{v}_t^i}}{\alpha_t}-\dfrac{\sqrt{\hat{v}_{t-1}^i}}{\alpha_{t-1}}\right)d^i(x_t^i,x^i_*)^2+\sum_{i=1}^n\dfrac{\sqrt{\hat{v}_1^i}}{\alpha_1}d^i(x_1^i,x^i_*)^2\right]\\ &\leq
\dfrac{1}{2(1-\beta_1)}\left[\sum_{i=1}^n\sum_{t=2}^T\left(\dfrac{\sqrt{\hat{v}_t^i}}{\alpha_t}-\dfrac{\sqrt{\hat{v}_{t-1}^i}}{\alpha_{t-1}}\right)D_\infty^2+\sum_{i=1}^n\dfrac{\sqrt{\hat{v}_1^i}}{\alpha_1}D_\infty^2\right]\\ &=
\dfrac{D_\infty^2}{2\alpha_T(1-\beta_1)}\sum_{i=1}^n \sqrt{\hat{v}_T^i},\label{eq:telescopic}
\end{align}
where the last equality comes from a standard telescopic summation. We now need the following lemma.

\begin{lemma}\label{lemma:mT}
\begin{equation}
\sum_{t=1}^T\dfrac{\alpha_t}{\sqrt{\hat{v}_t^i}}\Vert m_t^i\Vert_{x_t^i}^2\leq \dfrac{\alpha\sqrt{1+\log T}}{(1-\beta_1)(1-\gamma)\sqrt{1-\beta_2}}\sqrt{\sum_{t=1}^T\Vert g^i_t\Vert_{x_t^i}^2}
\end{equation}
\end{lemma}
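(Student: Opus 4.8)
The plan is to control each $\Vert m_t^i\Vert_{x_t^i}^2$ by a geometrically weighted sum of past squared gradient norms, to bound the adaptivity denominator $\sqrt{\hat{v}_t^i}$ from below, and then to exchange the order of summation and finish with two applications of Cauchy--Schwarz. The structure mirrors the Euclidean \textsc{Amsgrad} estimate of \cite{reddi2018convergence}; the only genuinely Riemannian ingredient is hidden in the very first step.

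First I would unroll the momentum recursion $m_t^i=\beta_{1t}\,\varphi^i_{x_{t-1}^i\to x_t^i}(m_{t-1}^i)+(1-\beta_{1t})g_t^i$ down to $m_0^i=0$, writing $m_t^i$ as a sum over $j\leq t$ of the gradients $g_j^i$ transported into the single tangent space $T_{x_t^i}\mathcal{M}_i$ by the appropriate composition of the maps $\varphi^i$. Here is the crucial point: since each $\varphi^i$ is an \emph{isometry}, so is any composition of them, hence every transported gradient keeps its norm, $\Vert g_j^i\Vert_{x_j^i}$. Applying the triangle inequality in $(T_{x_t^i}\mathcal{M}_i,\rho^i_{x_t^i})$, together with $\beta_{1k}\leq\beta_1$ and $1-\beta_{1j}\leq 1$, gives
\[
\Vert m_t^i\Vert_{x_t^i}\leq \sum_{j=1}^t \beta_1^{\,t-j}\Vert g_j^i\Vert_{x_j^i}.
\]
A Cauchy--Schwarz step on this sum, bounding $\sum_{j}\beta_1^{\,t-j}\leq 1/(1-\beta_1)$, then yields $\Vert m_t^i\Vert_{x_t^i}^2\leq \frac{1}{1-\beta_1}\sum_{j=1}^t \beta_1^{\,t-j}\Vert g_j^i\Vert_{x_j^i}^2$.

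Next I would lower-bound the denominator. Since $\hat{v}_t^i\geq v_t^i=(1-\beta_2)\sum_{j=1}^t\beta_2^{\,t-j}\Vert g_j^i\Vert_{x_j^i}^2\geq (1-\beta_2)\beta_2^{\,t-j}\Vert g_j^i\Vert_{x_j^i}^2$ for every $j\leq t$, we get $\sqrt{\hat{v}_t^i}\geq \sqrt{1-\beta_2}\,\beta_2^{(t-j)/2}\Vert g_j^i\Vert_{x_j^i}$. Substituting this and $\alpha_t=\alpha/\sqrt{t}$ into the double sum cancels one power of $\Vert g_j^i\Vert_{x_j^i}$ and produces exactly $\gamma^{\,t-j}=(\beta_1/\sqrt{\beta_2})^{\,t-j}$, giving
\[
\sum_{t=1}^T\frac{\alpha_t}{\sqrt{\hat{v}_t^i}}\Vert m_t^i\Vert_{x_t^i}^2\leq \frac{\alpha}{(1-\beta_1)\sqrt{1-\beta_2}}\sum_{t=1}^T\frac{1}{\sqrt{t}}\sum_{j=1}^t\gamma^{\,t-j}\Vert g_j^i\Vert_{x_j^i}.
\]

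Finally I would swap the order of summation, use $1/\sqrt{t}\leq 1/\sqrt{j}$ for $t\geq j$ and $\sum_{t\geq j}\gamma^{\,t-j}\leq 1/(1-\gamma)$ (which needs $\gamma<1$, as assumed) to reduce the inner sum to $\frac{1}{1-\gamma}\sum_{j=1}^T\Vert g_j^i\Vert_{x_j^i}/\sqrt{j}$, and then close with one more Cauchy--Schwarz together with $\sum_{j=1}^T 1/j\leq 1+\log T$. This produces precisely the claimed factor $\tfrac{\alpha\sqrt{1+\log T}}{(1-\beta_1)(1-\gamma)\sqrt{1-\beta_2}}$ multiplying $\sqrt{\sum_{t=1}^T\Vert g_t^i\Vert_{x_t^i}^2}$. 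I expect the isometry step to be the only delicate part: the entire bound rests on comparing the unrolled momentum term-by-term against the gradient norms $\Vert g_j^i\Vert_{x_j^i}$ inside one tangent space, which is legitimate only because $\varphi^i$ preserves lengths; once that observation is secured, every remaining manipulation is the standard Euclidean computation.
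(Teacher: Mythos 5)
Your proposal is correct and follows essentially the same route as the paper's proof: unroll the momentum recursion, use the isometry of the transports $\varphi^i$ to reduce transported gradients to their norms $\Vert g_j^i\Vert_{x_j^i}$, bound $\hat{v}_t^i$ from below term-by-term via $v_t^i\geq(1-\beta_2)\beta_2^{t-j}\Vert g_j^i\Vert_{x_j^i}^2$ to produce the $\gamma^{t-j}$ factors, swap the order of summation, and close with Cauchy--Schwarz and $\sum_{t=1}^T 1/t\leq 1+\log T$. The only cosmetic difference is that you obtain $\Vert m_t^i\Vert_{x_t^i}^2\leq\frac{1}{1-\beta_1}\sum_{j\leq t}\beta_1^{t-j}\Vert g_j^i\Vert_{x_j^i}^2$ via triangle inequality followed by scalar Cauchy--Schwarz, whereas the paper applies a weighted vector Cauchy--Schwarz (its lemma~\ref{lemma:CS}) before invoking the isometry; the two yield the identical intermediate bound.
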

\begin{proof}
Let's start by separating the last term, and removing the hat on $v$.
\begin{align}\label{eq:last}
\sum_{t=1}^T\dfrac{\alpha_t}{\sqrt{\hat{v}_t^i}}\Vert m_t^i\Vert_{x_t^i}^2 &\leq \sum_{t=1}^{T-1}\dfrac{\alpha_t}{\sqrt{\hat{v}_t^i}}\Vert m_t^i\Vert_{x_t^i}^2+\dfrac{\alpha_T}{\sqrt{\hat{v}_T^i}}\Vert m_T^i\Vert_{x_T^i}^2\\
&\leq\sum_{t=1}^{T-1}\dfrac{\alpha_t}{\sqrt{\hat{v}_t^i}}\Vert m_t^i\Vert_{x_t^i}^2+\dfrac{\alpha_T}{\sqrt{v_T^i}}\Vert m_T^i\Vert_{x_T^i}^2
\end{align}
Let's now have a closer look at the last term. We can reformulate $m_T^i$ as:
\begin{equation}
 m_T^i= \sum_{j=1}^T(1-\beta_{1j})\left(\prod_{k=1}^{T-j}\beta_{1,(T-k+1)}\right)\varphi^i_{x_{T-1}^i\to x_T^i}\circ\cdots\circ \varphi^i_{x_{j}^i\to x_{j+1}^i}(g^i_j)
\end{equation}
Applying lemma~\ref{lemma:CS}, we get 
\begin{multline}
 \Vert m_T^i\Vert_{x_T^i}^2\leq 
 \left(\sum_{j=1}^T(1-\beta_{1j})\left(\prod_{k=1}^{T-j}\beta_{1,(T-k+1)}\right)\right)\times\\
 \left(\sum_{j=1}^T(1-\beta_{1j})\left(\prod_{k=1}^{T-j}\beta_{1,(T-k+1)}\right)   \Vert \varphi^i_{x_{T-1}^i\to x_T^i}\circ\cdots\circ \varphi^i_{x_{j}^i\to x_{j+1}^i}(g^i_j)\Vert_{x_T^i}^2\right).
\end{multline}
Since $\varphi^i$ is an isometry, we always have $\Vert \varphi^i_{x\to y}(u)\Vert_y=\Vert u\Vert_x$, \textit{i.e.} 
\begin{equation}
\Vert \varphi^i_{x_{T-1}^i\to x_T^i}\circ\cdots\circ \varphi^i_{x_{j}^i\to x_{j+1}^i}(g^i_j)\Vert_{x_T^i}^2=\Vert g^i_j\Vert_{x_j^i}^2.
\end{equation}
Using that $\beta_{1k}\leq\beta_1$ for all $k\in [T]$,  
\begin{equation}
 \Vert m_T^i\Vert_{x_T^i}^2\leq\left(\sum_{j=1}^T(1-\beta_{1j})\beta_1^{T-j}\right)\left(\sum_{j=1}^T(1-\beta_{1j})\beta_1^{T-j}\Vert g^i_j\Vert_{x_j^i}^2\right).
\end{equation}
Finally, $(1-\beta_{1j})\leq 1$ and $\sum_{j=1}^T \beta_1^{T-j}\leq 1/(1-\beta_1)$ yield
\begin{equation}\label{eq:mT}
 \Vert m_T^i\Vert_{x_T^i}^2\leq\dfrac{1}{1-\beta_1}\left(\sum_{j=1}^T\beta_1^{T-j}\Vert g^i_j\Vert_{x_j^i}^2\right).
\end{equation}
Let's now look at $v_T^i$. It is given by 
\begin{equation}\label{eq:vT}
v_T^i=(1-\beta_2)\sum_{j=1}^T \beta_2^{T-j}\Vert g_j^i\Vert_{x_j^i}^2
\end{equation}
Combining Eq.~(\ref{eq:mT}) and Eq.~(\ref{eq:vT}) allows us to bound the last term of Eq.~(\ref{eq:last}):

\begin{align}
\dfrac{\alpha_T}{\sqrt{v_T^i}}\Vert m_T^i\Vert_{x_T^i}^2 &\leq \dfrac{\alpha}{(1-\beta_1)\sqrt{T}}\dfrac{\left(\sum_{j=1}^T\beta_1^{T-j}\Vert g^i_j\Vert_{x_j^i}^2\right)}{\sqrt{(1-\beta_2)\sum_{j=1}^T \beta_2^{T-j}\Vert g_j^i\Vert_{x_j^i}^2}}\\
&\leq \dfrac{\alpha}{(1-\beta_1)\sqrt{T}}\sum_{j=1}^T\dfrac{\left(\beta_1^{T-j}\Vert g^i_j\Vert_{x_j^i}^2\right)}{\sqrt{(1-\beta_2)\beta_2^{T-j}\Vert g_j^i\Vert_{x_j^i}^2}}\\
&= \dfrac{\alpha}{(1-\beta_1)\sqrt{T(1-\beta_2)}}\sum_{j=1}^T\gamma^{T-j}\Vert g^i_j\Vert_{x_j^i}
\end{align}
With this inequality, we can now bound every term of Eq.~(\ref{eq:last}):

\begin{align}
\sum_{t=1}^T\dfrac{\alpha_t}{\sqrt{\hat{v}_t^i}}\Vert m_t^i\Vert_{x_t^i}^2 &\leq \sum_{t=1}^T\dfrac{\alpha}{(1-\beta_1)\sqrt{t(1-\beta_2)}}\sum_{j=1}^t\gamma^{t-j}\Vert g^i_j\Vert_{x_j^i}\\
&=\dfrac{\alpha}{(1-\beta_1)\sqrt{1-\beta_2}}\sum_{t=1}^T\dfrac{1}{\sqrt{t}}\sum_{j=1}^t\gamma^{t-j}\Vert g^i_j\Vert_{x_j^i}\\
&=\dfrac{\alpha}{(1-\beta_1)\sqrt{1-\beta_2}}\sum_{t=1}^T\Vert g^i_t\Vert_{x_j^i}\sum_{j=t}^T\gamma^{j-t}/\sqrt{j}\\
&\leq\dfrac{\alpha}{(1-\beta_1)\sqrt{1-\beta_2}}\sum_{t=1}^T\Vert g^i_t\Vert_{x_j^i}\sum_{j=t}^T\gamma^{j-t}/\sqrt{t}\\
&\leq\dfrac{\alpha}{(1-\beta_1)\sqrt{1-\beta_2}}\sum_{t=1}^T\Vert g^i_t\Vert_{x_j^i}\dfrac{1}{(1-\gamma)\sqrt{t}}\\
&\leq \dfrac{\alpha}{(1-\beta_1)(1-\gamma)\sqrt{1-\beta_2}}\sqrt{\sum_{t=1}^T\Vert g^i_t\Vert_{x_j^i}^2}\sqrt{\sum_{t=1}^T\dfrac{1}{t}}\\
&\leq \dfrac{\alpha\sqrt{1+\log T}}{(1-\beta_1)(1-\gamma)\sqrt{1-\beta_2}}\sqrt{\sum_{t=1}^T\Vert g^i_t\Vert_{x_j^i}^2}
\end{align}
\end{proof}
\noindent Putting together Eqs.~(\ref{eq:grad.log}), (\ref{eq:g-conv}), (\ref{eq:telescopic}) and lemma~\ref{lemma:mT} lets us bound the regret:

\begin{align}
&\sum_{t=1}^T f_t(x_t)-f_t(x_*) \leq \sum_{i=1}^n\sum_{t=1}^T\langle -g_t^i,\log_{x_t^i}(x_i^*)\rangle_{x_t^i}\\
&\leq \dfrac{\sqrt{T} D_\infty^2}{2\alpha(1-\beta_1)}\sum_{i=1}^n \sqrt{\hat{v}_T^i} + \dfrac{D_\infty^2}{2(1-\beta_1)}\sum_{i=1}^n\sum_{t=1}^T\beta_{1t}\dfrac{\sqrt{\hat{v}_t^i}}{\alpha_t} \\
& +\dfrac{\alpha\sqrt{1+\log T}}{(1-\beta_1)^2(1-\gamma)\sqrt{1-\beta_2}}\sum_{i=1}^n\dfrac{\zeta(\kappa_i,D_\infty)+1}{2}\sqrt{\sum_{t=1}^T\Vert g^i_t\Vert_{x_j^i}^2},
\end{align}
where we used the facts that $d\mapsto\zeta(\kappa,d)$ is an increasing function, and that $\alpha_t/\sqrt{\hat{v}_t^i}\leq \alpha_{t-1}/\sqrt{\hat{v}_{t-1}^i}$, which enables us to bound both the second and third terms of the right-hand side of Eq.~(\ref{eq:grad.log}) using lemma~\ref{lemma:mT}.
\end{proof}

\paragraph{Remark.} Let us notice that similarly as for \textsc{Amsgrad}, \textsc{Ramsgrad} also has a regret bounded by $\mathcal{O}(G_\infty\sqrt{T})$. This is easy to see from the proof of lemma~\ref{lemma:gT}. Hence the actual upper-bound on the regret is a minimum between the one in $\mathcal{O}(G_\infty\sqrt{T})$ and the one of Theorem~\ref{thm:RAMSgrad}.

\section{Proof of Theorem~\ref{thm:RadamNc}}\label{proof:RadamNc}
\begin{proof}
Similarly as for the proof of Theorem~\ref{thm:RAMSgrad} (and with same notations), we obtain the inequality:
\begin{multline}
\langle -g_t^i,\log_{x_t^i}(x_i^*)\rangle_{x_t^i} \leq \dfrac{\sqrt{v_t^i}}{2\alpha_t(1-\beta_{1t})}\left(d^i(x_t^i,x^i_*)^2-d^i(x_{t+1}^i,x^i_*)^2\right)\\
+\zeta(\kappa_i,d^i(x_t^i,x^i_*))\dfrac{\alpha_t}{2(1-\beta_{1t})\sqrt{v_t^i}}\Vert m_t^i\Vert_{x_t^i}^2\\
+\dfrac{\beta_{1t}}{2(1-\beta_{1t})}\dfrac{\alpha_t}{\sqrt{v_t^i}}\Vert m_{t-1}^i\Vert_{x_{t-1}^i}^2 + \dfrac{\beta_{1t}}{2(1-\beta_{1t})}\dfrac{\sqrt{v_t^i}}{\alpha_t}\Vert\log_{x_t^i}(x_i^*) \Vert_{x_t^i}^2.
\end{multline}
From the geodesic convexity of $f_t$ for $1\leq t\leq T$, we have 
\begin{equation}
\sum_{t=1}^T f_t(x_t)-f_t(x_*) \leq \sum_{t=1}^T \langle -g_t,\log_{x_t}(x_*)\rangle_{x_t}=\sum_{i=1}^n\sum_{t=1}^T \langle -g_t^i,\log^i_{x_t^i}(x_*^i)\rangle_{x_t^i}.
\end{equation}
With the same techniques as before, we obtain the same bound on the first term:
\begin{align}
\sum_{i=1}^n\sum_{t=1}^T\dfrac{\sqrt{v_t^i}}{2\alpha_t(1-\beta_{1t})}\left(d^i(x_t^i,x^i_*)^2-d^i(x_{t+1}^i,x^i_*)^2\right)\leq
\dfrac{D_\infty^2}{2\alpha_T(1-\beta_1)}\sum_{i=1}^n \sqrt{v_T^i}.
\end{align}
However, for the other terms, we now need a new lemma:
\begin{lemma}\label{lemma:gT}
\begin{equation}
\sum_{t=1}^T\dfrac{\alpha_t}{\sqrt{\hat{v}_t^i}}\Vert m_t^i\Vert_{x_t^i}^2\leq \dfrac{2\alpha}{(1-\beta_1)^2}\sqrt{\sum_{t=1}^T\Vert g^i_t\Vert_{x_t^i}^2}.
\end{equation}
\end{lemma}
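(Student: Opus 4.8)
The plan is to reduce the claim to the same telescoping estimate that underlies \textsc{Adagrad}'s regret bound, exploiting the fact that the schedule $\beta_{2t}=1-1/t$ turns the adaptivity term into a genuine average of past squared gradient norms. First I would observe that, since $\hat v_t^i\ge v_t^i$ always holds, one has $\alpha_t/\sqrt{\hat v_t^i}\le\alpha_t/\sqrt{v_t^i}$, so it suffices to bound $\sum_{t=1}^T(\alpha_t/\sqrt{v_t^i})\Vert m_t^i\Vert_{x_t^i}^2$. Next, unrolling the recursion $v_t^i=\beta_{2t}v_{t-1}^i+(1-\beta_{2t})\Vert g_t^i\Vert_{x_t^i}^2$ with $\beta_{2t}=1-1/t$ gives $t\,v_t^i=(t-1)v_{t-1}^i+\Vert g_t^i\Vert_{x_t^i}^2$, hence $v_t^i=\frac1t\sum_{j=1}^t\Vert g_j^i\Vert_{x_j^i}^2$, so that $\sqrt{v_t^i}=\frac1{\sqrt t}\sqrt{\sum_{j=1}^t\Vert g_j^i\Vert_{x_j^i}^2}$.

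Then I would reuse the bound on the momentum term derived in the proof of Lemma~\ref{lemma:mT}: exactly the computation leading to Eq.~(\ref{eq:mT}), applied at each time $t$ rather than only at $T$, yields $\Vert m_t^i\Vert_{x_t^i}^2\le\frac1{1-\beta_1}\sum_{j=1}^t\beta_1^{t-j}\Vert g_j^i\Vert_{x_j^i}^2$ (this uses only $\beta_{1k}\le\beta_1$, which holds since $\beta_{1t}=\beta_1\lambda^{t-1}$ with $\lambda<1$, together with the isometry property of $\varphi^i$). Substituting $\alpha_t=\alpha/\sqrt t$ and the expression for $\sqrt{v_t^i}$, the factors of $\sqrt t$ cancel and each summand becomes $\frac{\alpha}{1-\beta_1}\,\frac{\sum_{j=1}^t\beta_1^{t-j}\Vert g_j^i\Vert_{x_j^i}^2}{\sqrt{\sum_{l=1}^t\Vert g_l^i\Vert_{x_l^i}^2}}$.

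Writing $a_j:=\Vert g_j^i\Vert_{x_j^i}^2$, the crux is then to bound $\sum_{t=1}^T\frac{\sum_{j\le t}\beta_1^{t-j}a_j}{\sqrt{\sum_{l\le t}a_l}}$. I would swap the order of summation to $\sum_{j=1}^T a_j\sum_{t=j}^T\frac{\beta_1^{t-j}}{\sqrt{\sum_{l\le t}a_l}}$ and, using that $\sum_{l\le t}a_l\ge\sum_{l\le j}a_l$ for $t\ge j$, pull the denominator out at $t=j$ and sum the geometric series $\sum_{t\ge j}\beta_1^{t-j}\le 1/(1-\beta_1)$. This leaves $\frac{1}{1-\beta_1}\sum_{j=1}^T\frac{a_j}{\sqrt{\sum_{l\le j}a_l}}$, and the final step is the standard estimate $\sum_{j=1}^T a_j/\sqrt{\sum_{l\le j}a_l}\le 2\sqrt{\sum_{j=1}^T a_j}$, proved from $a_j/\sqrt{S_j}\le 2(\sqrt{S_j}-\sqrt{S_{j-1}})$ with $S_j:=\sum_{l\le j}a_l$ and telescoping. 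Collecting constants gives the factor $2\alpha/(1-\beta_1)^2$ and the claimed bound.

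I expect the only delicate point to be the order-swap-and-monotonicity step: one must pull $1/\sqrt{\sum_{l\le t}a_l}$ out of the geometric sum \emph{before} summing, using monotonicity of the partial sums, rather than bounding the denominator termwise as $\sqrt{\sum_{l\le t}a_l}\ge\sqrt{a_j}$, which would only yield the much weaker $\sum_j\sqrt{a_j}=\sum_j\Vert g_j^i\Vert_{x_j^i}$ on the right. Everything else is routine, and the cancellation of $\sqrt t$ — the real reason the $\beta_{2t}=1-1/t$ schedule works — is precisely what makes the \textsc{Adagrad}-style telescoping applicable here.
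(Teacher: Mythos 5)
Your proposal is correct and follows essentially the same route as the paper's proof: the exact averaging $v_t^i=\frac1t\sum_{j\le t}\Vert g_j^i\Vert_{x_j^i}^2$ from the $\beta_{2t}=1-1/t$ schedule, the momentum bound $\Vert m_t^i\Vert_{x_t^i}^2\le\frac{1}{1-\beta_1}\sum_{j\le t}\beta_1^{t-j}\Vert g_j^i\Vert_{x_j^i}^2$ reused from Lemma~\ref{lemma:mT}, the cancellation of $\sqrt t$, the swap of summation order combined with monotonicity of the partial sums and the geometric series bound, and finally Lemma~\ref{lemma:boring}. The only cosmetic difference is that the paper applies the partial-sum monotonicity termwise before swapping the order of summation, whereas you swap first and then invoke monotonicity, which is the same computation.
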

\begin{proof}
Let's start by separating the last term.
\begin{align}
\sum_{t=1}^T\dfrac{\alpha_t}{\sqrt{v_t^i}}\Vert m_t^i\Vert_{x_t^i}^2 &\leq \sum_{t=1}^{T-1}\dfrac{\alpha_t}{\sqrt{v_t^i}}\Vert m_t^i\Vert_{x_t^i}^2+\dfrac{\alpha_T}{\sqrt{v_T^i}}\Vert m_T^i\Vert_{x_T^i}^2.
\end{align}
Similarly as before, we have 
\begin{equation}
 \Vert m_T^i\Vert_{x_T^i}^2\leq\dfrac{1}{1-\beta_1}\left(\sum_{j=1}^T\beta_1^{T-j}\Vert g^i_j\Vert_{x_j^i}^2\right).
\end{equation}
Let's now look at $v_T^i$. Since $\beta_{2t}=1-1/t$, it is simply given by 
\begin{equation}\label{eq:simple_stuff}
v_T^i=\sum_{t=1}^T \Vert g_t^i\Vert_{x_t^i}^2/T.
\end{equation}
Combining these yields:
\begin{align}
\dfrac{\alpha_T}{\sqrt{v_T^i}}\Vert m_T^i\Vert_{x_T^i}^2 \leq \dfrac{\alpha}{1-\beta_1}\dfrac{\sum_{j=1}^T\beta_1^{T-j}\Vert g^i_j\Vert_{x_j^i}^2}{\sqrt{\sum_{t=1}^T \Vert g_t^i\Vert_{x_t^i}^2}}\leq \dfrac{\alpha}{1-\beta_1}\sum_{j=1}^T\dfrac{\beta_1^{T-j}\Vert g^i_j\Vert_{x_j^i}^2}{\sqrt{\sum_{k=1}^j \Vert g_k^i\Vert_{x_k^i}^2}}.
\end{align}
Using this inequality at all time-steps gives
\begin{align}
\sum_{t=1}^T\dfrac{\alpha_t}{\sqrt{v_t^i}}\Vert m_t^i\Vert_{x_t^i}^2 &\leq\dfrac{\alpha}{1-\beta_1}\sum_{j=1}^T\dfrac{\sum_{l=0}^{T-j}\beta_1^l\Vert g^i_j\Vert_{x_j^i}^2}{\sqrt{\sum_{k=1}^j \Vert g_k^i\Vert_{x_k^i}^2}}\\
&\leq\dfrac{\alpha}{(1-\beta_1)^2}\sum_{j=1}^T\dfrac{\Vert g^i_j\Vert_{x_j^i}^2}{\sqrt{\sum_{k=1}^j \Vert g_k^i\Vert_{x_k^i}^2}}\\
&\leq \dfrac{2\alpha}{(1-\beta_1)^2}\sqrt{\sum_{j=1}^T\Vert g_j^i\Vert_{x_j^i}^2},
\end{align}
where the last inequality comes from lemma~\ref{lemma:boring}.
\end{proof}
Putting everything together, we finally obtain
\begin{align}
&\sum_{t=1}^T f_t(x_t)-f_t(x_*) \leq \sum_{i=1}^n\sum_{t=1}^T\langle -g_t^i,\log_{x_t^i}(x_i^*)\rangle_{x_t^i}\\
&\leq \dfrac{\sqrt{T} D_\infty^2}{2\alpha(1-\beta_1)}\sum_{i=1}^n \sqrt{v_T^i} + \dfrac{D_\infty^2}{2(1-\beta_1)}\sum_{i=1}^n\sum_{t=1}^T\beta_{1t}\dfrac{\sqrt{v_t^i}}{\alpha_t} \\
& +\dfrac{\alpha}{(1-\beta_1)^3}\sum_{i=1}^n(\zeta(\kappa_i,D_\infty)+1)\sqrt{\sum_{t=1}^T\Vert g^i_t\Vert_{x_j^i}^2},
\end{align}
where we used that for this choice of $\alpha_t$ and $\beta_{2t}$, we have $\alpha_t/\sqrt{v_t^i}\leq \alpha_{t-1}/\sqrt{v_{t-1}^i}$. Finally,

\begin{align}
\dfrac{D_\infty^2}{2(1-\beta_1)}\sum_{i=1}^n\sum_{t=1}^T\beta_{1t}\dfrac{\sqrt{v_t^i}}{\alpha_t} &\leq \dfrac{D_\infty^2 G_\infty n}{2\alpha(1-\beta_1)}\sum_{t=1}^T \sqrt{t}\beta_{1t}\leq \dfrac{\beta_1 D_\infty^2 G_\infty n}{2\alpha(1-\beta_1)(1-\lambda)^2}.
\end{align}
This combined with Eq.~(\ref{eq:simple_stuff}) yields the final result.
\end{proof}

\paragraph{Remark.} Notice the appearance of a factor $n/\alpha$ in the last term of the last equation. This term is missing in corollaries 1 and 2 of \cite{reddi2018convergence}, which is a mistake. However, this dependence in $n$ is not too harmful here, since this term does not depend on $T$.


\section{\textsc{Amsgrad}}\label{sec:AMSgrad}

\begin{theorem}[Convergence of \textsc{Amsgrad}]\label{thm:AMSgrad}
Let $(f_t)$ be a family of differentiable, convex functions from $\mathbb{R}^n$ to $\mathbb{R}$. Let $(x_t)$ and $(v_t)$ be the sequences obtained from Algorithm~\ref{alg:alg-2}, $\alpha_t=\alpha/\sqrt{t}$, $\beta_1=\beta_{11}$, $\beta_{1t}\leq\beta_1$ for all $t\in [T]$ and $\gamma =\beta_1/\sqrt{\beta_2} <1$. Assume that each $\mathcal{X}_i\subset\mathbb{R}$ has a diameter bounded by $D_\infty$ and that for all $1\leq i\leq n$, $t\in [T]$ and $x\in\mathcal{X}$, $\Vert(\mathrm{grad} f_t(x))\Vert_{\infty}\leq G_\infty$. For $(x_t)$ generated using the \textsc{Amsgrad}  (Algorithm~\ref{alg:alg-2}), we have the following bound on the regret
\begin{multline}
R_T \leq \dfrac{\sqrt{T} D_\infty^2}{2\alpha(1-\beta_1)}\sum_{i=1}^n \sqrt{\hat{v}_T^i} + \dfrac{D_\infty^2}{2(1-\beta_1)}\sum_{i=1}^n\sum_{t=1}^T\beta_{1t}\dfrac{\sqrt{\hat{v}_t^i}}{\alpha_t}+\\
\dfrac{\alpha\sqrt{1+\log T}}{(1-\beta_1)^2(1-\gamma)\sqrt{1-\beta_2}}\sum_{i=1}^n\sqrt{\sum_{t=1}^T (g^i_t)^2}
\end{multline}
\end{theorem}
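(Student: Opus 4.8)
The plan is to follow the Euclidean argument of \cite{reddi2018convergence}, which is exactly the specialization of the proof of Theorem~\ref{thm:RAMSgrad} obtained by setting $\kappa_i=0$: the cosine law then becomes an exact identity with no curvature correction (so $\zeta(0,\cdot)=1$, explaining why the $(\zeta+1)/2$ factor of the Riemannian bound collapses to $1$), and each log map reduces to a difference $\log_{x_t^i}(x_*^i)=x_*^i-x_t^i$. First I would invoke convexity of each $f_t$ to write $\sum_{t=1}^T f_t(x_t)-f_t(x_*)\leq\sum_{i=1}^n\sum_{t=1}^T\langle g_t^i,\,x_t^i-x_*^i\rangle$, reducing the regret to a coordinatewise control of these inner products.

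For each coordinate I would expand $(\tilde{x}_{t+1}^i-x_*^i)^2$, where $\tilde{x}_{t+1}^i=x_t^i-\alpha_t m_t^i/\sqrt{\hat{v}_t^i}$ is the pre-projection iterate, to obtain the exact identity
\[
\langle m_t^i,\,x_t^i-x_*^i\rangle=\frac{\sqrt{\hat{v}_t^i}}{2\alpha_t}\big((x_t^i-x_*^i)^2-(\tilde{x}_{t+1}^i-x_*^i)^2\big)+\frac{\alpha_t}{2\sqrt{\hat{v}_t^i}}(m_t^i)^2.
\]
I would then use nonexpansiveness of $\Pi_{\mathcal{X}_i}$ to replace $\tilde{x}_{t+1}^i$ by $x_{t+1}^i$, substitute the momentum recursion $m_t^i=\beta_{1t}m_{t-1}^i+(1-\beta_{1t})g_t^i$, and apply Cauchy--Schwarz together with Young's inequality to the cross term $\langle m_{t-1}^i,\,x_t^i-x_*^i\rangle$, exactly as in the derivation of Eq.~(\ref{eq:grad.log}). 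This produces three families of terms to be summed over $t$.

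Summing, the first family telescopes: since $\alpha_t=\alpha/\sqrt{t}$ makes $\sqrt{\hat{v}_t^i}/\alpha_t$ nondecreasing and each $\mathcal{X}_i$ has diameter at most $D_\infty$, the distance differences collapse to $\frac{D_\infty^2}{2\alpha_T(1-\beta_1)}\sum_i\sqrt{\hat{v}_T^i}$ as in Eq.~(\ref{eq:telescopic}), giving the first regret term. The $\beta_{1t}$-weighted pieces give the second term. I expect the crux to be the third family, $\sum_{t=1}^T\frac{\alpha_t}{\sqrt{\hat{v}_t^i}}(m_t^i)^2$, which is precisely Lemma~\ref{lemma:mT}; the extra $\frac{1}{1-\beta_1}$ in that lemma combines with the $\frac{1}{2(1-\beta_{1t})}\le\frac{1}{2(1-\beta_1)}$ prefactor to produce the $(1-\beta_1)^2$ in the denominator of the final term.

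The main obstacle is therefore establishing Lemma~\ref{lemma:mT} in the Euclidean setting. I would drop the hat using $\hat{v}_T^i\geq v_T^i$, unroll $m_T^i$ as the weighted gradient sum $\sum_{j=1}^T(1-\beta_{1j})\big(\prod_{k=1}^{T-j}\beta_{1,(T-k+1)}\big)g_j^i$, apply Cauchy--Schwarz to get $(m_T^i)^2\leq\frac{1}{1-\beta_1}\sum_{j=1}^T\beta_1^{T-j}(g_j^i)^2$, and bound $v_T^i=(1-\beta_2)\sum_{j=1}^T\beta_2^{T-j}(g_j^i)^2$ termwise. The delicate manipulation is interchanging the order of summation and using $\gamma=\beta_1/\sqrt{\beta_2}<1$ so that $\sum_{j\geq t}\gamma^{j-t}\leq 1/(1-\gamma)$ decouples, after which a final Cauchy--Schwarz over $t$ yields the $\sqrt{\sum_t 1/t}\leq\sqrt{1+\log T}$ factor. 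Collecting the three summed families gives the stated bound verbatim.
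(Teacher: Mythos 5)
Your proposal is correct and takes essentially the same route as the paper: the paper disposes of this statement simply by citing Theorem 4 of \cite{reddi2018convergence}, and the argument you reconstruct is exactly that one, equivalently the Euclidean specialization ($\kappa_i=0$, so $\zeta=1$ and $\log_{x_t^i}(x_*^i)=x_*^i-x_t^i$) of the paper's own proof of Theorem~\ref{thm:RAMSgrad} in appendix~\ref{proof:RAMSgrad}. All the key steps (exact cosine identity, projection nonexpansiveness, telescoping via monotonicity of $\sqrt{\hat{v}_t^i}/\alpha_t$, and the Euclidean version of lemma~\ref{lemma:mT}) are handled correctly.
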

\begin{proof} See Theorem 4 of \cite{reddi2018convergence}.
\end{proof}

\section{Useful lemmas}
The following lemma is a user-friendly inequality developed by \cite{zhang2016first} in order to prove convergence of gradient-based optimization algorithms, for geodesically convex functions, in Alexandrov spaces.
\begin{lemma}[Cosine inequality in Alexandrov spaces]\label{lemma:alexandrov}
If $a$, $b$, $c$, are the sides (\textit{i.e.}, side lengths) of a geodesic triangle in an Alexandrov space with curvature lower bounded by $\kappa$, and $A$ is the angle between sides $b$ and $c$, then 
\begin{equation}
a^2\leq \dfrac{\sqrt{\vert\kappa\vert}c}{\tanh(\sqrt{\vert\kappa\vert}c)}b^2 + c^2 - 2bc\cos(A).
\end{equation}
\end{lemma}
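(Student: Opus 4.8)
The plan is to reduce the statement to the model space of constant curvature $\kappa$ and there verify a one-variable analytic inequality. Since the ambient space has curvature bounded below by $\kappa$, the defining hinge-comparison property of Alexandrov spaces applies: for the geodesic hinge with legs $b,c$ and included angle $A$, the length $a$ of the opposite side is at most the corresponding length $\bar a$ in the model space $M_\kappa$ (comparison triangles are ``thinner'' under a lower curvature bound). Thus it suffices to prove $\bar a^2 \le \zeta(\kappa,c)\,b^2 + c^2 - 2bc\cos A$ for a genuine geodesic triangle in $M_\kappa$. Rescaling the metric by $|\kappa|$, distances scale by $\sqrt{|\kappa|}$ while $\zeta$ keeps the invariant form $\tilde c/\tanh\tilde c$, so I may assume $\kappa=-1$; and since a geodesic triangle lies in a totally geodesic surface, I work in $\mathbb{H}^2$, where the hyperbolic law of cosines gives $\cosh \bar a = \cosh b\cosh c - \sinh b\sinh c\cos A$.

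Next I would fix $c$ and $A$ and regard $\phi(b):=\bar a(b)$ as a function of the leg length $b\ge 0$, setting $G(b):=\zeta b^2 + c^2 - 2bc\cos A - \phi(b)^2$ with $\zeta = c\coth c$. At $b=0$ the triangle degenerates to $\phi(0)=c$, giving $G(0)=0$; differentiating the law of cosines once and evaluating at $b=0$ gives $\phi'(0)=-\cos A$, hence $G'(0)=0$ as well. It therefore remains to show $G'(b)\ge 0$, for which I would prove $(\phi^2)''\le 2\zeta$ and integrate twice from $0$.

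The key computation is to differentiate the law of cosines a second time. Writing $\psi(b):=\sinh b\cosh c - \cosh b\sinh c\cos A$ so that $\sinh\phi\cdot\phi'=\psi$, one finds $\psi'(b)=\cosh\phi$ exactly (again by the law of cosines), which yields the clean identity $\phi''=\coth\phi\,(1-(\phi')^2)$ and therefore $(\phi^2)''=2\phi\coth\phi\,(1-(\phi')^2)+2(\phi')^2$. Two facts then control this expression: the map $x\mapsto x\coth x$ is increasing with $x\coth x\ge 1$, and $\phi$ is $1$-Lipschitz in $b$, so $(\phi')^2\le 1$ (concretely, the first variation of arc length gives $\phi'=\cos B$, where $B$ is the interior angle at the moving vertex). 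Combining these should show $(\phi^2)''\le 2c\coth c = 2\zeta$, whence $G''\ge 0$ and the bound follows.

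The main obstacle is precisely this last step in the regime $\bar a>c$: there $\phi\coth\phi>c\coth c$, so the crude bound $(\phi^2)''\le 2\phi\coth\phi$ obtained by discarding the nonpositive term is not enough, and one must exploit the correlation --- enforced by the hyperbolic law of sines $\sinh\phi\sin B=\sinh b\sin A$ --- between a large opposite side $\phi$ (forcing $\sin B$ small) and a value of $(\phi')^2=\cos^2 B$ near $1$, which damps the factor $1-(\phi')^2$ exactly enough to keep $(\phi^2)''\le 2\zeta$. Making this trade-off quantitative is the delicate calculus core; the comparison-geometry reduction and the differential identity are the routine scaffolding around it.
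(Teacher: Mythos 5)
Your scaffolding is sound: for curvature bounded below by $\kappa$, the hinge version of Toponogov's comparison does give $a\le\bar a$, so the lemma reduces to the model space; after rescaling to $\kappa=-1$ your identities are all correct --- $\sinh\phi\,\phi'=\psi$ with $\psi'=\cosh\phi$, hence $\phi''=\coth\phi\,(1-(\phi')^2)$, and $\phi'=\cos B$ --- and the scheme $G(0)=G'(0)=0$, $G''\ge 0\Rightarrow G\ge 0$ is a valid way to conclude. The genuine gap is that you stop exactly where the content of the lemma lies: in the regime $\phi>c$ you never prove $(\phi^2)''\le 2\zeta$, you only describe the trade-off that should make it true. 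Since the case $\phi\le c$ is immediate from the monotonicity of $x\mapsto x\coth x$, the unhandled case is the entire difficulty, and as submitted the argument is not a proof. A secondary error: your law of sines is misstated. With $B$ the interior angle at the moving vertex (the endpoint of leg $b$), the side opposite $B$ is the leg $c$, so the identity reads $\sinh\phi\,\sin B=\sinh c\,\sin A$, not $\sinh b\,\sin A$; the distinction matters, because the correct right-hand side does not grow with $b$, which is precisely what makes the damping quantitative.

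For what it is worth, your plan can be completed, so the approach is unfinished rather than wrong. From $\phi'=\cos B$ and the corrected law of sines, $(\phi^2)''/2=\cos^2 B+\phi\coth\phi\,\sin^2 B=1+(\phi\coth\phi-1)\sin^2 B=1+h(\phi)\,\sinh^2 c\,\sin^2 A$, where $h(x):=(x\coth x-1)/\sinh^2 x$. Since $h(c)\sinh^2 c=c\coth c-1$ and $\sin^2 A\le 1$, the case $\phi\ge c$ follows once one shows that $h$ is nonincreasing; differentiating, this reduces to the elementary inequality $3\sinh y\le 2y+y\cosh y$ for $y\ge 0$, which holds by comparing Taylor coefficients term by term. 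That one-line calculus lemma is the missing core of your proof. Finally, note that the paper itself does not prove this statement at all: it defers to Lemma 6, Section 3.1 of \cite{zhang2016first}. So there is no ``paper approach'' to match against; a completed version of your argument would stand as a self-contained alternative to that reference.
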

\begin{proof}
See section 3.1, lemma 6 of \cite{zhang2016first}.
\end{proof}
\begin{lemma}[An analogue of Cauchy-Schwarz]\label{lemma:CS}
For all $p,k\in\mathbb{N}^*$, $u_1,...,u_k\in\mathbb{R}^p$, $a_1,...,a_k\in\mathbb{R}_+$, we have
\begin{equation}
\Vert\sum_i a_i u_i\Vert^2_2\leq \left(\sum_i a_i\right)\left(\sum_i a_i\Vert u_i\Vert^2_2\right).
\end{equation}
\end{lemma}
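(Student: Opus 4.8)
The plan is to reduce the claim to an elementary algebraic identity, crucially exploiting that the weights $a_i$ are non-negative. First I would dispose of the degenerate case $\sum_i a_i = 0$: then every $a_i = 0$ and both sides vanish, so henceforth assume $\sum_i a_i > 0$.

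The core step is to expand both sides using the bilinearity of the Euclidean inner product. Writing $\langle\cdot,\cdot\rangle$ for the standard inner product on $\mathbb{R}^p$, the left-hand side becomes
\begin{equation}
\left\Vert \sum_i a_i u_i\right\Vert_2^2 = \sum_{i,j} a_i a_j \langle u_i, u_j\rangle,
\end{equation}
while the right-hand side is
\begin{equation}
\left(\sum_i a_i\right)\left(\sum_j a_j \Vert u_j\Vert_2^2\right) = \sum_{i,j} a_i a_j \Vert u_j\Vert_2^2.
\end{equation}

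The trick is then to \emph{symmetrize} the double sum on the right: since the summation runs over all ordered pairs $(i,j)$, averaging the $(i,j)$ and $(j,i)$ contributions rewrites it as $\frac12\sum_{i,j} a_i a_j(\Vert u_i\Vert_2^2 + \Vert u_j\Vert_2^2)$. Subtracting the left-hand side then collapses into
\begin{equation}
\frac12 \sum_{i,j} a_i a_j\left(\Vert u_i\Vert_2^2 + \Vert u_j\Vert_2^2 - 2\langle u_i, u_j\rangle\right) = \frac12 \sum_{i,j} a_i a_j \Vert u_i - u_j\Vert_2^2 \geq 0,
\end{equation}
where non-negativity is immediate because each $a_i \geq 0$ and each summand is a squared norm. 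This yields the stated inequality.

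There is essentially no hard step here; the only things to get right are the symmetrization bookkeeping and the sign hypothesis $a_i \geq 0$, which is exactly what forces the final sum to be non-negative (the inequality fails for signed weights). Alternatively, and perhaps more transparently, one could normalize $w_i := a_i/\sum_j a_j$ so that $\sum_i w_i = 1$ and $w_i \geq 0$, then invoke convexity of $x \mapsto \Vert x\Vert_2^2$ via Jensen's inequality to obtain $\Vert \sum_i w_i u_i\Vert_2^2 \leq \sum_i w_i \Vert u_i\Vert_2^2$; multiplying through by $(\sum_j a_j)^2$ recovers the claim. I would favor the symmetrization argument as the main proof, since it is self-contained and in fact pins down the gap between the two sides exactly.
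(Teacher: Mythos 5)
Your proof is correct, but it takes a genuinely different route from the paper. The paper expands $\Vert\sum_i a_i u_i\Vert_2^2 = \sum_{i,j} a_i a_j u_i^T u_j$ and then applies the standard Cauchy--Schwarz inequality \emph{twice}: once termwise, writing $a_i a_j u_i^T u_j = \sqrt{a_i a_j}\,(\sqrt{a_i}u_i)^T(\sqrt{a_j}u_j) \leq \sqrt{a_i a_j}\,\Vert\sqrt{a_i}u_i\Vert_2\Vert\sqrt{a_j}u_j\Vert_2$ so the double sum factors into $\bigl(\sum_i \sqrt{a_i}\,\Vert\sqrt{a_i}u_i\Vert_2\bigr)^2$, and once more on the sequences $(\sqrt{a_i})_i$ and $(\sqrt{a_i}\Vert u_i\Vert_2)_i$ to reach the claimed bound. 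You instead avoid invoking Cauchy--Schwarz altogether and establish the Lagrange-type identity
\begin{equation}
\Bigl(\sum_i a_i\Bigr)\Bigl(\sum_i a_i\Vert u_i\Vert_2^2\Bigr) - \Bigl\Vert\sum_i a_i u_i\Bigr\Vert_2^2 = \frac{1}{2}\sum_{i,j} a_i a_j \Vert u_i - u_j\Vert_2^2 \geq 0,
\end{equation}
which is where the hypothesis $a_i \geq 0$ enters. Your symmetrization argument buys something the paper's does not: it pins down the gap exactly, hence also characterizes the equality case (all $u_i$ carrying positive weight must coincide), and it is fully self-contained, whereas the paper's proof is shorter but leans on the classical inequality as a black box (used twice). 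Both arguments work verbatim in any inner product space, and your Jensen alternative (with the normalization caveat $\sum_i a_i > 0$, which you correctly isolate) is a third valid route. No gaps.
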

\begin{proof}
The proof consists in applying Cauchy-Schwarz' inequality two times:
\begin{align}
\Vert\sum_i a_i u_i\Vert^2_2 &= \sum_{i,j} a_i a_j u_i^T u_j\\
& = \sum_{i,j} \sqrt{a_i a_j}(\sqrt{a_i}u_i)^T(\sqrt{a_j}u_j)\\
& \leq \sum_{i,j} \sqrt{a_i a_j}\Vert \sqrt{a_i}u_i\Vert_2 \Vert \sqrt{a_j}u_j\Vert_2\\
& = \left(\sum_i \sqrt{a_i} \Vert \sqrt{a_i}u_i\Vert_2\right)^2\\
& \leq \left(\sum_i a_i\right)\left(\sum_i\alpha_i\Vert u_i\Vert_2^2\right).
\end{align}
\end{proof}
Finally, this last lemma is used by \cite{reddi2018convergence} in their convergence proof for \textsc{AdamNc}. We need it too, in an analogue lemma.
\begin{lemma}[\citep{auer2002adaptive}]\label{lemma:boring}
For any non-negative real numbers $y_1,...,y_t$, the following holds:
\begin{equation}
\sum_{i=1}^t\dfrac{y_i}{\sqrt{\sum_{j=1}^i y_j}}\leq 2\sqrt{\sum_{i=1}^t y_i}.
\end{equation}
\end{lemma}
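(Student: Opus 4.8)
The plan is to reduce the inequality to a telescoping sum by working with the partial sums. First I would set $S_i := \sum_{j=1}^i y_j$ with the convention $S_0 = 0$, so that each numerator is an increment, $y_i = S_i - S_{i-1}$, and the monotonicity $S_0 \leq S_1 \leq \cdots \leq S_t$ holds because the $y_i$ are nonnegative. Before anything else I would dispose of the degenerate indices: if $S_i = 0$ then $y_1 = \cdots = y_i = 0$, so the summand $y_i/\sqrt{S_i}$ is vacuous (taken to be $0$), and it suffices to sum over those $i$ with $S_i > 0$; I may therefore assume $S_i>0$ throughout the range considered.

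The crux of the argument is the pointwise bound
\begin{equation}
\frac{y_i}{\sqrt{S_i}} \;=\; \frac{S_i - S_{i-1}}{\sqrt{S_i}} \;\leq\; 2\left(\sqrt{S_i} - \sqrt{S_{i-1}}\right).
\end{equation}
To establish it I would rationalize the right-hand side, writing $2(\sqrt{S_i} - \sqrt{S_{i-1}}) = 2(S_i - S_{i-1})/(\sqrt{S_i} + \sqrt{S_{i-1}})$, and then invoke $\sqrt{S_i} + \sqrt{S_{i-1}} \leq 2\sqrt{S_i}$, which is precisely the monotonicity $S_{i-1} \leq S_i$. Dividing the nonnegative quantity $S_i - S_{i-1} = y_i$ by the smaller denominator $\sqrt{S_i}+\sqrt{S_{i-1}}$ in place of $2\sqrt{S_i}$ only enlarges the fraction, which yields the claimed inequality.

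Finally I would sum over $i$ and let the right-hand side collapse:
\begin{equation}
\sum_{i=1}^t \frac{y_i}{\sqrt{S_i}} \;\leq\; 2\sum_{i=1}^t \left(\sqrt{S_i} - \sqrt{S_{i-1}}\right) \;=\; 2\left(\sqrt{S_t} - \sqrt{S_0}\right) \;=\; 2\sqrt{S_t},
\end{equation}
which is the desired bound since $S_0 = 0$ and $S_t = \sum_{i=1}^t y_i$. I do not expect any serious obstacle here: the entire argument rests on the elementary difference-of-square-roots trick together with monotonicity of the partial sums, and the only point requiring care is the bookkeeping for vanishing partial sums, which is handled by restricting the summation to indices with $S_i > 0$.
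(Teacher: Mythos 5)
Your proof is correct. Note that the paper itself gives no argument for this lemma at all --- it simply defers to the citation \citep{auer2002adaptive}, so you have supplied what the paper leaves implicit. For comparison, the proof in that reference proceeds by induction on $t$: the inductive step reduces to showing $2\sqrt{S_{t-1}} + y_t/\sqrt{S_t} \leq 2\sqrt{S_t}$, i.e.\ $\sqrt{S_t - y_t} \leq \sqrt{S_t} - y_t/(2\sqrt{S_t})$, which is verified by squaring. Your telescoping argument is this same inequality unrolled --- your per-term bound $y_i/\sqrt{S_i} \leq 2\left(\sqrt{S_i}-\sqrt{S_{i-1}}\right)$ is exactly the inductive step --- but presented without induction it is arguably cleaner, and your explicit handling of the degenerate $S_i=0$ terms (which must be read as $0$) is a point of care that the standard treatments gloss over.
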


\end{document}